\DeclareMathOperator{\diag}{diag}
\DeclareMathOperator{\thespan}{span}
\renewcommand{\A}{\ensuremath{\mathbf{A}}\xspace}
\newcommand{\D}{\ensuremath{\mathbf{D}}\xspace}
\newcommand{\G}{\ensuremath{\mathbf{G}}\xspace}
\newcommand{\g}{\ensuremath{\mathbf{g}}\xspace}
\renewcommand{\H}{\ensuremath{\mathbf{H}}\xspace}
\newcommand{\U}{\ensuremath{\mathbf{U}}\xspace}
\newcommand{\V}{\ensuremath{\mathbf{V}}\xspace}
\renewcommand{\u}{\ensuremath{\mathbf{u}}\xspace}
\renewcommand{\v}{\ensuremath{\mathbf{v}}\xspace}
\renewcommand{\y}{\ensuremath{\mathbf{y}}\xspace}
\renewcommand{\x}{\ensuremath{\mathbf{x}}\xspace}
\newcommand{\M}{\ensuremath{\mathbf{M}}\xspace}
\renewcommand{\P}{\ensuremath{\mathbf{P}}\xspace}
\newcommand{\K}{\ensuremath{\mathbf{K}}\xspace}
\newcommand{\orthonormalize}{\ensuremath{\operatorname{\mathtt{orthonormalize}}}\xspace}
\newcommand{\fedOrthonormalize}{\ensuremath{\operatorname{\mathtt{federated-gram-schmidt}}}\xspace}
\newcommand{\stackVert}{\ensuremath{\operatorname{\mathtt{stack-vertically}}}\xspace}
\newcommand{\fedSub}{\ensuremath{\operatorname{\mathtt{federated-subspace-iteration}}}\xspace}
\newcommand{\initApprox}{\ensuremath{\operatorname{\mathtt{init-approximative}}}\xspace}
\newcommand{\SVD}{\ensuremath{\operatorname{\mathtt{singular-value-decomposition}}}\xspace}
\newcommand{\ie}{i.\,e.\@\xspace}
\newcommand{\eg}{e.\,g.\@\xspace}
\newcommand{\wrt}{w.\,r.\,t.\@\xspace}
\newcommand{\cf}{cf.\@\xspace}
\newcommand{\FEDGS}{\textsf{FED-GS}\xspace}
\newcommand{\GUO}{\textsf{GUO}\xspace}
\newcommand{\FEDRI}{\textsf{RI-FULL}\xspace}
\newcommand{\FEDAI}{\textsf{AI-FULL}\xspace}
\newcommand{\FEDRANDRI}{\textsf{RANDOMIZED}\xspace}
\newcommand{\AIONLY}{\textsf{AI-ONLY}\xspace}
\theoremstyle{thmstyleone}%
\newtheorem{proposition}{Proposition}%
\newtheorem{constraint}{Constraint}%
\crefname{constraint}{constraint}{constraints}
\theoremstyle{thmstyletwo}%
\theoremstyle{thmstylethree}%
\begin{document}

\title[Federated singular value decomposition]{Federated singular value decomposition for high dimensional data}


\author*[1]{\fnm{Anne} \sur{Hartebrodt}}\email{hartebrodt@imada.sdu.dk}

\author[1]{\fnm{Richard} \sur{R\"ottger}}\email{roettger@imada.sdu.dk}
\equalcont{Joint last authors}

\author[2]{\fnm{David B.} \sur{Blumenthal}}\email{david.b.blumenthal@fau.de}
\equalcont{Joint last authors}

\affil*[1]{\orgdiv{Department of Mathematics and Computer Science}, \orgname{University of Southern Denmark}, \orgaddress{\street{Campusvej 55}, \city{Odense}, \postcode{5230}, , \country{Denmark}}}

\affil[2]{\orgdiv{Department Artificial Intelligence in Biomedical Engineering (AIBE)}, \orgname{Friedrich-Alexander University Erlangen-Nürnberg (FAU)}, \orgaddress{\street{Konrad-Zuse-Str. 3/5}, \postcode{91052} \city{Erlangen}, \country{Germany}}}

\affil[a]{ORCID:0000-0002-9172-3137}
\affil[b]{ORCID:0000-0003-4490-5947}
\affil[c]{ORCID:0000-0001-8651-750X}

\keywords{Singular value decomposition, Federated learning, Principal component analysis, Genome-wide association studies}


\abstract{
	Federated learning (FL) is emerging as a privacy-aware alternative to classical cloud-based machine learning. In FL, the sensitive data remains in data silos and only aggregated parameters are exchanged. Hospitals and research institutions which are not willing to share their data can join a federated study without breaching confidentiality. In addition to the extreme sensitivity of biomedical data, the high dimensionality poses a challenge in the context of federated genome-wide association studies (GWAS). In this article, we present a federated singular value decomposition (SVD) algorithm, suitable for the privacy-related and computational requirements of GWAS. Notably, the algorithm has a transmission cost independent of the number of samples and is only weakly dependent on the number of features, because the singular vectors associated with the samples are never exchanged and the vectors associated with the features only for a fixed number of iterations. Although motivated by GWAS, the algorithm is generically applicable for both horizontally and vertically partitioned data.
}

\maketitle


\section{Introduction}
Federated learning (FL) has recently gained attention as a privacy-aware alternative to centralized computation. 
Unlike in centralized machine learning where the data is consolidated at a central server and a model is calculated on the combined data, in FL, the data remains at the data owners machine \citep{Mothukuri_2021}. Instead of the data, only model parameters are sent to the (untrusted) aggregator which combines the local models into a global model. No raw data is exchanged in FL. See \Cref{fig:federated-learning-schematic} for a schematic comparison of centralized (cloud) learning and FL. In the cloud-based approach, data contributors send their private data to a central server where the model is computed and thereby lose agency over it.

\begin{figure}[htb]
\centering
\includegraphics[width=0.45\linewidth]{./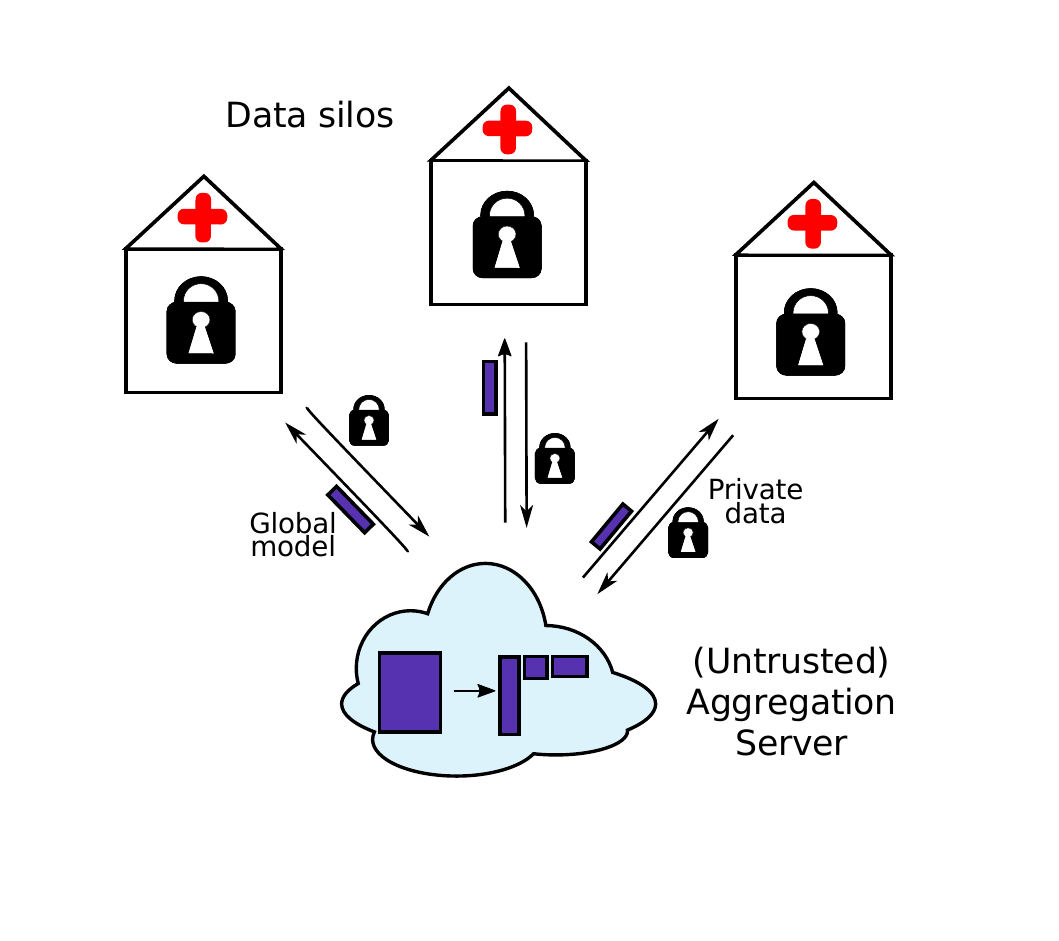}
\hspace{0.05\linewidth}
\includegraphics[width=0.45\linewidth]{./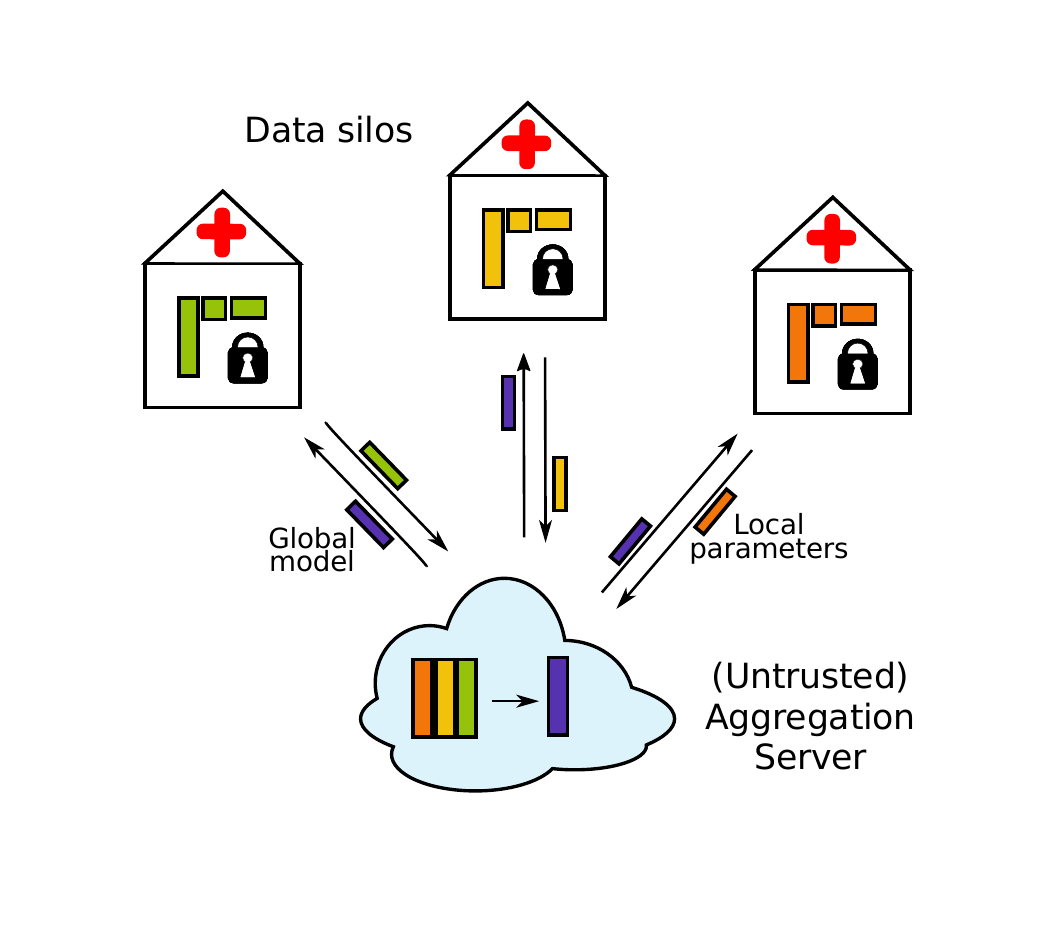}
\caption{Schematic comparison of traditional cloud base approaches (left) and federated learning (right).}
\label{fig:federated-learning-schematic}
\end{figure}

FL is subdivided in cross-device and cross-silo FL. Cross-device FL assumes a high number of devices, such as mobile phones or sensors with limited compute power connected in a dynamic fashion, meaning clients are expected join and drop out during the learning process. Cross-silo FL has a lower number of participants which hold a larger amount of data, have higher compute power and are connected in a more static fashion. Clients are not expected to join and drop out during the learning process randomly \citep{Kairouz2021}.

An attractive application case for cross-silo FL are genome-wide association studies (GWAS), which investigate the relationship of genetic variation with phenotypic traits on large cohorts \citep{Visscher2017, Tam2019}. Since genetic data are extremely sensitive, data holders cannot make it publicly available. The practical feasibility of using FL for GWAS has been demonstrated recently \citep{splink2020, Cho2018}.

Since GWAS are often done on populations of mixed ancestry, cryptic population confounders should be controlled for before associating the genetic variants to the phenotypic trait of interest. The standard way for doing this is to compute the leading eigenvectors of the sample covariance matrix via principal component analysis (PCA), and to include these eigenvectors as confounding variables to the models used for the association tests \citep{Price2006, Galinsky2016}.

For federated GWAS, a PCA algorithm for vertically partitioned data is required for computing the eigenvectors (see \Cref{sec:prelim} for a detailed explanation). Although a few such algorithms are available \citep{Kargupta2001,Qi2003,Guo2012,Wu2018}, none of them is suitable for federated GWAS. More precisely, the algorithms reviewed by \cite{Wu2018} use client-to-client communication and are therefore unsuitable for the star-like FL architectures used in GWAS, where relatively few data holders collaborate in a static setting. The algorithms presented by \cite{Kargupta2001} and \cite{Qi2003} rely on estimating a proxy covariance matrix and hence do not scale to large GWAS datasets, which often contain genetic variation data for hundreds of thousands of individuals. One of the few covariance free PCA algorithm suitable for a star-like architecture has been presented by \cite{Guo2012}. However, this algorithm broadcasts the complete first $k-1$ sample eigenvectors to the aggregator, which constitutes a privacy leakage that should be avoided in federated GWAS \citep{sigir2021}. \cite{Cho2018} present a secure multiparty protocol for GWAS which includes PCA relying on householder reflections. The protocol includes three external parties and potential physical shipping of data. The setup is fundamentally different: the data holders are individuals who only have access to one record. They create secret shares which are processed by two computing parties.

In previous algorithms, including algorithms designed for horizontally partitioned data, such as described by \cite{Balcan_2014}, the exchanged parameters scale with the number of genetic variants (features) in the data set as the feature eigenvectors are exchanged. At the scale of GWAS with several million genetic variants this is another challenge for the existing algorithms. Furthermore, due to the iterative nature of the algorithm, the process is prone to information leakage, a problem previously not investigated. More precisely, the feature eigenvector updates exchanged during the learning process can be used to compute the feature-covariance matrix given a sufficient number of iterations. This makes the algorithm equivalent with algorithms exchanging the entire covariance matrix in terms of disclosed information. The feature covariance matrix is a summary statistic over all samples, but due to its size contains a high amount of information and can be used to generate realistically looking samples. Therefore, the communication of the entire feature eigenvectors should also be avoided as far as possible.

Extrapolating from the shortcomings of existing approaches, we can state that, for federated GWAS, a PCA algorithm for vertically partitioned data is required that combines the following properties:
\begin{itemize}
\item The algorithm should be suitable for a star-like FL architecture, \ie, require only client-to-aggregator but no client-to-client communication.
\item The algorithm should not rely on computing or approximating the covariance matrix.
\item The algorithm should be communication efficient.
\item The algorithm should avoid the communication of the sample eigenvectors and reduce the communication of the feature eigenvectors.
\end{itemize}

In this paper, we present the first algorithm that combines all of these desirable properties and can hence be used for federated GWAS (and all other applications where these properties are required). We prove that our algorithm is equivalent to centralized vertical subspace iteration \citep{Halko_2010}\,---\,a state-of-the-art centralized, covariance-free SVD algorithm\,---\,and therefore generically applicable to any kind of data. Thereby, we show that the notion of \enquote{horizontally} and \enquote{vertically} partitioned data are irrelevant for SVD. Furthermore, we apply two strategies to make the algorithm more communication efficient, both in terms of communication rounds and transmitted data volume. More specifically, we employ approximate PCA \citep{Balcan_2014} and randomized PCA \citep{Halko_2010}. We show in an empirical evaluation that the eigenvectors computed by our approaches converge to the centrally computed eigenvectors after sufficiently many iterations. In sum, the article contains the following main contributions:

\begin{itemize}
    \item We present the first federated PCA algorithm for vertically partitioned data which meets the requirements that apply in federated GWAS settings.
    \item We prove that our algorithm is equivalent to centralized power iteration and show that it exhibits an excellent convergence behavior in practice.
    \item This algorithm is generically applicable for federated singular value decomposition on both \enquote{horizontally} and \enquote{vertically} partitioned data.
\end{itemize}

This article is an extended and consolidated version of a previous conference publication \citep{Hartebrodt2021} with the following additional contributions: a demonstration how iterative leakage can pose a problem for federated power iteration; a further reduction in transmission cost, and increase in privacy, due to the use of randomized PCA; a data dependent speedup due to the use of approximate PCA. The remainder of this paper is organized as follows: In  \Cref{sec:prelim}, we introduce concepts and notations that are used throughout the paper. In \Cref{sec:related}, we discuss related work. In \Cref{sec:algo}, we describe the proposed algorithms. We then describe how to extract the covariance matrix from the updates in \Cref{sec:leakage}. In \Cref{sec:eval}, we report the results of the experiments. \Cref{sec:conc} concludes the paper.
 

\section{Preliminaries}\label{sec:prelim}

\subsection{Federated learning and employed data model}

Typically, a star-like client-aggregator architecture is used in biomedical federated solutions \citep{Steed2010, splink2020}, with the data holders acting as clients. The data sets at the client sites will be called \emph{local data sets} and the parameters or models learned using this data will be called \emph{local parameters} or \emph{local models}, while the final aggregated model will be called \emph{pooled model}. The optimal result of the pooled model is achieved when it equals the result of the conventional model calculated on all data, which we call the \emph{global model}.

In federated settings, the data can be distributed in several ways. Either the clients observe a full set of variables for a subset of the samples (horizontal partitioning) or they have a partial set of variables for all samples (vertical partitioning) \citep{B2017,Wu2018}. In this paper, we assume that we are given a global data matrix $\A\in\mathbb{R}^{m\times n}$, where $m$ is the number of features (genetic variants, in the context of GWAS) and $n$ is the overall number of samples. The data is split across $S$ local sites as $\A=[\A^1\dots\A^s\dots\A^S]$, where $\A^s\in\mathbb{R}^{m\times n^s}$ and $n^s$ denotes the number of samples available at site $s$. From a semantic point of view, the partitioning is hence horizontal, since the samples are distributed over the local sites. However, from a technical point of view, the partitioning is vertical, since the samples correspond to the columns of \A. The reason for this rather unintuitive setup is that, when using PCA for GWAS, samples are treated as features, as detailed in the following paragraphs. 

\subsection{Principal component analysis and singular value decomposition}

Given a data matrix $\A \in \mathbb{R}^{m \times n}$, the PCA is the decomposition of the covariance matrix $\M=\A^\top \A\in \mathbb{R}^{n \times n}$ into $\M = \V\boldsymbol{\Sigma} \V^ \top$. $\boldsymbol{\Sigma}\in\mathbb{R}^{n\times n}$ is a diagonal matrix containing the eigenvalues $(\sigma_{i})_{i=1}^n$ of \M in non-increasing order, and $\V\in\mathbb{R}^{n \times n}$ is the corresponding matrix of eigenvectors \citep{Jolliffe2002}. Singular value decomposition (SVD) is closely related to PCA and an extension of PCA to non-square matrices. Many of the PCA algorithms actually call SVD to do the actual computation, because it is more efficient. Given a data matrix $\A \in \mathbb{R}^{m \times n}$, the SVD is its decomposition into $\A = \U\boldsymbol{\Sigma} \V^ \top$. The matrices $\U$ and $\V$ are the left and right singular vector matrices.  Usually, one is only interested in the top $k$ eigenvalues and corresponding eigenvectors. Since $k$ is arbitrary but fixed throughout this paper, we let $\G\in\mathbb{R}^{n\times k}$ and $\H\in\mathbb{R}^{m\times k}$ denote these first $k$ eigenvectors (\ie, \G corresponds to the first $k$ columns of $\V$).  $\G$ is typically used to obtain a low-dimensional representation $\A\mapsto\A\G\in\mathbb{R}^{m\times k}$ of the data matrix \A, which can then be used for downstream data analysis tasks. This, however, is not the way PCA is used in GWAS, as we will explain next.
\subsection{Genome-wide association studies}

The genome stores hereditary information that controls the phenotype of an individual in interplay with the environment. The genetic information is stored in the DNA encoded as a sequence of bases (A, T, C, G), the positions are called loci.  If we observe two or more possible bases at a specific locus in a population, we call this locus a \emph{single nucleotide polymorphism} (SNP). The predominant base in a population is called the \emph{major allele}; bases at lower frequency are called \emph{minor alleles} \citep{Tam2019}.

Genome wide association studies seek to identify SNPs that are linked to a specific phenotype \citep{Visscher2017, Tam2019}. Phenotypes of interest can for example be the presence or absence of diseases, or quantitative traits such as height or body mass index. The SNPs for a large cohort of individuals are tested for association with the trait of interest. Typically, simple models such as linear or logistic regression are used for this \citep{Visscher2017, splink2020}. The input to a GWAS is an $n$-dimensional phenotype vector $\y$, a matrix of SNPs $\A\in\mathbb{R}^{m\times n}$, and confounding factors such as age or sex, given as column vectors $\x_r\in\mathbb{R}^{n}$. The SNPs are encoded as categorical values between $0$ and $2$, representing the number of minor alleles observed in the individual at the respective position. Each SNP $l\in[m]$ is tested in an individual association test
\begin{equation}
\y \sim \beta_0 + \beta_1\cdot\A_{l,\bullet}^\top  + \sum_{r=1}^{R}\beta_{r+1}\cdot\x_r + \epsilon\text{,}
\end{equation}
where $\A_{l,\bullet}$ denotes the $l$\textsuperscript{th} row of \A.

\subsection{Principal component analysis for genome-wide association studies}

Confounding factors such as ancestry and population substructure can alter the outcome of an association test and create false hits if not properly controlled for \citep{Tam2019}. PCA has emerged as a popular strategy to infer population substructure and a SMPC based protocol has been presented by \citep{Cho2018}. More precisely, the first $k$ (usually $k=10$) eigenvectors $\G=[\g_1\dots\g_k]\in\mathbb{R}^{n\times k}$ of the sample covariance matrix $\A^\top\A$ are included into the association test as covariates \citep{Galinsky2016, Price2006}:

\
\begin{equation}
\y \sim \beta_0 + \beta_1\cdot\A_{l,\bullet}^\top  + \sum_{r=1}^{R}\beta_{r+1}\cdot\x_r + \sum_{i=1}^k\beta_{i+R+1}\cdot\g_i + \epsilon
\end{equation}

In federated GWAS, each local site $s$ needs to have access only to the partial eigenvector matrix $\G^s$ corresponding to the locally available samples. Consequently, computing the complete eigenvector matrix \G at the aggregator and/or sharing $\G^s$ with other local sites $s^\prime$ should be avoided to reduce the possibility of information leakage. This is especially important because \cite{sigir2021} have shown that, if \G is available at the aggregator in a federated GWAS pipeline, the aggregator can in principle reconstruct the raw GWAS data $\A_{l,\bullet}$ for SNP $l$. Federated PCA algorithms that are suitable for GWAS should hence have to respect the following constraint:
\begin{constraint}
In a GWAS-suitable federated PCA algorithm, the aggregator does not have access to the complete eigenvector matrix \G and each site $s$ has access only to its share $\G^s$ of \G.\label{constr:eig}
\end{constraint}

The PCA in GWAS is usually performed on only a subsample of the SNPs, but there seems to be no consensus as to how many SNPs should be used. Some PCA-based stratification methods rely on a small set of ancestry informative markers \citep{Li2016}, while others employ over \num{100000} SNPs \citep{Gauch2019}.

\begin{figure}[htb]
    \centering
\begin{tikzpicture}[node distance=.6cm]
       
         \node (A-classic) {\tikz\draw[step=0.25] (0,0) grid (1.5,-1.25);};
       \node[fill=white,font=\Large,inner sep=1pt] (A-ckassic-label) at (A-classic) {\A};
       \node[rotate=90,anchor=south,font=\small,yshift=-.1cm] at (A-classic.west) {features};
       \node[anchor=south,font=\small,yshift=-.1cm] at (A-classic.north) {samples};

       \node[right = of A-classic, anchor=south west, xshift=0.5cm, yshift=0.5cm] (Sigma-classic) {\tikz\draw[step=0.25] (0,0) grid (1.25,-1.25);};
       \node[fill=white,font=\Large,inner sep=1pt] (Sigma-classic-label) at (Sigma-classic) {$\A^\top\A$};
       \node[rotate=90,anchor=south,font=\small,yshift=-.1cm] (Sigma-classic-y) at (Sigma-classic.west) {features};
       \node[anchor=south,font=\small,yshift=-.1cm] at (Sigma-classic.north) {features};
       
       \node[right = of Sigma-classic] (G-classic) {\tikz\draw[step=0.25] (0,0) grid (0.75,-1.25);};
       \node[fill=white,font=\Large,inner sep=1pt] (G-classic-label) at (G-classic) {\H};
       \node[rotate=90,anchor=south,font=\small,yshift=-.1cm] (G-classic-y) at (G-classic.west) {features};
       \node[anchor=south,font=\small,yshift=-.1cm] at (G-classic.north) {$k$};
       
       \node[right = of G-classic, anchor=south west, xshift=0.5cm] (AG-classic) {\tikz\draw[step=0.25] (0,0) grid (0.75,-1.5);};
       \node[fill=white,font=\Large,inner sep=1pt] (AG-classic-label) at (AG-classic) {$\A\H$};
       \node[rotate=90,anchor=south,font=\small,yshift=-.1cm] (AG-classic-y) at (AG-classic.west) {samples};
       \node[anchor=south,font=\small,yshift=-.1cm] at (AG-classic.north) {$k$};

       \node[right = of A-classic, anchor=north west, xshift=0.5cm, yshift=-0.5cm] (Sigma-gwas) {\tikz\draw[step=0.25] (0,0) grid (1.5,-1.5);};
       \node[fill=white,font=\Large,inner sep=1pt] (Sigma-gwas-label) at (Sigma-gwas) {$\A\A^\top$};
       \node[rotate=90,anchor=south,font=\small,yshift=-.1cm] (Sigma-gwas-y) at (Sigma-gwas.west) {samples};
       \node[anchor=south,font=\small,yshift=-.1cm] at (Sigma-gwas.north) {samples};
       
       \node[right = of Sigma-gwas] (G-gwas) {\tikz\draw[step=0.25] (0,0) grid (0.75,-1.5);};
       \node[fill=white,font=\Large,inner sep=1pt] (G-gwas-label) at (G-gwas) {\G};
       \node[rotate=90,anchor=south,font=\small,yshift=-.1cm] (G-gwas-y) at (G-gwas.west) {samples};
       \node[anchor=south,font=\small,yshift=-.1cm] at (G-gwas.north) {$k$};

       \node[right = of G-gwas, align=center, rounded corners=2pt,draw=black, anchor=north west, xshift=0.3cm] (test) {association\\test};
       
       \node[right = of G-gwas, anchor=south west, yshift=0.65cm, xshift=0.5cm] (G-svd) {\tikz\draw[step=0.25] (0,0) grid (0.75,-1.25);};
       \node[fill=white,font=\Large,inner sep=1pt] (G-svd-label) at (G-svd) {\H};
      \node[rotate=90,anchor=south,font=\small,yshift=-.1cm] (G-svd-y) at (G-svd.west) {features};
%
		 \node[right = of G-svd, xshift=-0.75cm, yshift=0.25cm] (sigma-svd) {\tikz\draw[step=0.25] (0,0) grid (-0.75,0.75);};
		\node[fill=white,font=\Large,inner sep=1pt] (sigma-svd-label) at (sigma-svd) {$\Sigma$};
		\node[anchor=south,font=\small,yshift=-.1cm] at (sigma-svd.north) {$k$};

       \node[right = of sigma-svd, anchor=west, xshift=-0.75cm] (H-svd) {\tikz\draw[step=0.25] (0,0) grid (-1.5,0.75);};
       \node[fill=white,font=\Large,inner sep=1pt] (H-svd-label) at (H-svd) {$\G^\top$};
       \node[anchor=south,font=\small,yshift=-.1cm] at (H-svd.north) {samples};

       \draw[->, semithick] (A-classic.north east) -- (Sigma-classic-y.north);
       \draw[->, semithick] (Sigma-classic.east) -- (G-classic-y.north);
       \draw[->, semithick] (G-classic.east) -- (AG-classic-y.north);
       
       \draw[->, semithick] (A-classic.south east) -- (Sigma-gwas-y.north);
       \draw[->, semithick] (Sigma-gwas.east) -- (G-gwas-y.north);
       \draw[->, semithick] (G-gwas.east) -- ($(test.west)+(-.15,0)$);
       
        \draw[->, semithick] (A-classic.east) -- (G-svd-y.north);
        \draw[->, semithick] (G-gwas.north east) -- ($(G-svd-y.north)+(0,-0.1)$);
        \draw[->, semithick] (G-classic.south east) -- ($(G-svd-y.north)+(0, +0.1)$);
       
  \end{tikzpicture}%
    \caption{Regular PCA for dimensionality reduction (top); GWAS PCA for sample stratification (bottom); and SVD (middle).}
    \label{fig:gwas-challenge}
\end{figure}
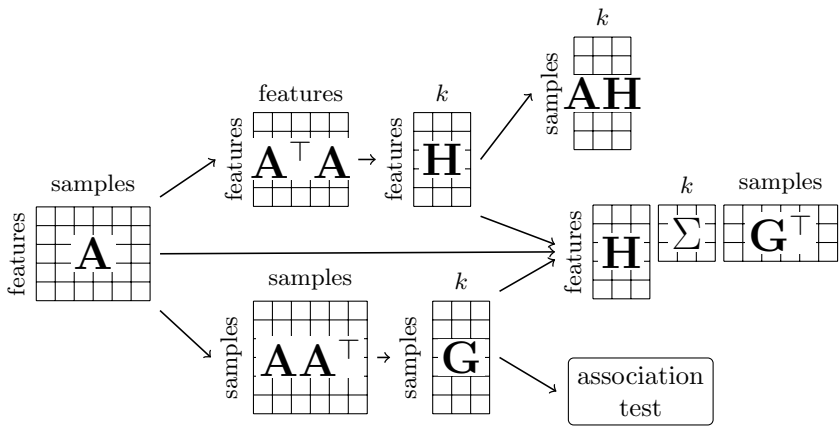

Note that PCA for GWAS is conceptually different from \enquote{regular} PCA for feature reduction (\cf \Cref{fig:gwas-challenge}). For feature reduction, we would decompose the $m \times m$ SNP by SNP covariance matrix and compute a set of \enquote{meta-SNPs} for each sample. This is not what is required for GWAS. Instead, the $n \times n$ sample by sample covariance matrix $\A^\top\A$ is decomposed. In our federated setting where \A is vertically distributed across local sites $s\in[S]$, $\A^\top\A$ looks as follows (recall that, unlike in regular PCA, columns correspond to samples and rows to features):
\begin{equation}
\A^\top\A = 
\begin{pmatrix}
{\A^1}^\top\A^1 & {\A^1}^\top\A^2 & \cdots & {\A^1}^\top\A^S \\
{\A^2}^\top\A^1 & {\A^2}^\top\A^2 & \cdots & {\A^2}^\top\A^S \\
\vdots & \vdots & \ddots & \vdots \\
{\A^S}^\top\A^1 & {\A^S}^\top\A^2 & \cdots & {\A^S}^\top\A^S
\end{pmatrix}
\end{equation}

It is clear that $\A^\top\A$ cannot be computed directly without sharing patient level data. Moreover, with a growing number of samples, this matrix can become very large and computing it becomes infeasible. For instance, the UK Biobank\,---\,a large cohort frequently used for GWAS\,---\,contains more than 4 million SNPs for more than \num{500000} individuals. Following directly from the definition of PCA, an exact computation of the covariance matrix would furthermore violate \Cref{constr:eig}. These considerations lead to the second constraint for federated PCA algorithms suitable for GWAS: 

\begin{constraint}
A GWAS-suitable federated PCA algorithm must work on vertically partitioned data and does not rely on computing or approximating the covariance matrix.\label{constr:cov}
\end{constraint}

\subsection{Gram-Schmidt orthonormalization}
The Gram-Schmidt algorithm transforms a set of linearly independent vectors into a set of mutually orthogonal vectors. Given a matrix $\V=[\v_1\dots\v_k]\in\mathbb{R}^{r\times k}$ of $k$ linearly independent column vectors, a matrix  $\U=[\u_1\dots\u_k]\in\mathbb{R}^{r\times k}$ of orthogonal column vectors with the same span can be computed as
\begin{equation}
\u_i = \begin{cases}
\v_i & \text{if }i=1\\
\v_i - \sum_{j=1}^{i-1}  r_{i,j} \cdot \u_j & \text{if }i\in[k]\setminus\{1\}
\end{cases}\text{,}
\label{eq:qr-proj}
\end{equation}
where $r_{i,j}=\u_j^\top\v_i/n_j$ with $n_j=\u_j^\top\u_j$.

The vectors can then be scaled to unit Euclidean norm as $\u_i\mapsto(1/\sqrt{n_i})\cdot\u_i$ to achieve a set of orthonormal vectors. In the context of PCA, this can be used to ensure orthonormality of the candidate eigenvectors in iterative procedures, which otherwise suffer from numerical instability in practice \citep{Guo2012}. 

\subsection{Notations}

\Cref{tab:notation} provides an overview of notations which are used throughout the paper.

\begin{table}[ht]
\centering
\caption{Notation table.}\label{tab:notation}
\begin{tabular}{ll}
\toprule
Syntax & Semantics \\
\midrule
$[N]\subset\mathbb{N}$ & index set $[N]=\{i\in\mathbb{N}\mid1\leq i\leq N\}$ \\
$S\in\mathbb{N}$ & number of sites \\
$m\in\mathbb{N}$ & number of features (\ie SNPs) \\
$n\in\mathbb{N}$ & total number of samples \\
$n^s\in\mathbb{N}$ & number of samples at site $s\in[S]$ \\
$k\in\mathbb{N}$ & number of eigenvectors \\
$\A\in\mathbb{R}^{m\times n}$ & complete data matrix \\
$\A^s\in\mathbb{R}^{m\times n^s}$ & subset of data available at site $s\in[S]$\\
$\G_i\in\mathbb{R}^{n\times k}$ & right singular matrix of $\A$ at iteration $i$ \\
$\G\in\mathbb{R}^{n\times k}$ & right singular matrix of $\A$\\
$\G_i^s\in\mathbb{R}^{n^s\times k}$ & partial right singular matrix of $\A$ at iteration $i$\\
$\G^s\in\mathbb{R}^{n^s\times k}$ & converged partial right singular matrix $\A$. \\
$\H_i\in\mathbb{R}^{m\times k}$ & left singular matrix of $\A$ at iteration $i$\\
$\H\in\mathbb{R}^{m\times k}$ & converged left singular matrix of $\A$ \\
$\V\in\mathbb{R}^{r\times k}$ & a generic column vector matrix\\
$\U\in\mathbb{R}^{r\times k}$ & an orthonormal matrix with $\thespan(\U)=\thespan(\V)$ \\
$\M\in\mathbb{R}^{m\times m}$ & exact covariance matrix\\
$\hat{\A},\hat{\M}, \hat{\H}$, $\hat{\G}$ & approximations of $\A$, $\M$, $\H$ and $\G$ \\
\bottomrule
\end{tabular}
\end{table}


\section{Related work}\label{sec:related}

\subsection{Centralized, iterative, covariance-free principal component analysis}

While classical PCA algorithms rely on computing the covariance matrix $\A^\top\A$ \citep{Jolliffe2002}, there are several covariance-free approaches to iteratively approximate the top $k$ eigenvalues and eigenvectors \citep{Saad2011}. \Cref{alg:subspace-centralised} summarizes the centralized, iterative, covariance-free PCA algorithm suggested by \cite{Halko_2010}, which will serve as point of departure for our federated approach. First, an initial eigenvector matrix is sampled randomly and orthonormalized (\crefrange{cent:init-start}{cent:init-end}). In every iteration $i$, improved candidate eigenvectors $\G_i$ of $\A^\top\A$ are computed (\crefrange{cent:while-start}{cent:while-end}). Once a suitable termination criterion is met (e.g., convergence, maximal number of iterations, time limit, etc.), the last candidate eigenvectors are returned (\cref{cent:ret}).

 \begin{algorithm}[ht]
	\caption{Vertical Subspace Iteration \cite{Halko_2010}.}
	\label{alg:subspace-centralised}
	\KwInput{Data matrix $\A\in\mathbb{R}^{m \times n}$, number of eigenvectors $k$.}
	\KwOutput{Singular matrices $\G \in \mathbb{R}^{n \times k}$ and $\H \in \mathbb{R}^{m \times k}$ of $\A$.}
	Generate $\G_0 \in \mathbb{R}^{n \times k}$ randomly\label{cent:init-start}\;
	$\G_0\gets  \orthonormalize(\G_0)$\label{cent:init-end}\;
	$i\gets 1$\;
	\While{termination criterion not met}{
		$\H_i = \A\G_{i-1}$\label{cent:while-start}\;
		$\H_i = \orthonormalize(\H_i)$\; 
		$\G_i = \A^\top\H_i$\; \label{cent:G-update}
		$\G_i \gets  \orthonormalize(\G_i)$\label{cent:while-end}\;
		$i\gets i+1$\;
	}
	\Return $\G_i$, $\H_i$\label{cent:ret}\;
\end{algorithm}

To update the candidate eigenvector matrices $\G_i=\A^\top\H_i=\A^\top\A\G_{i-1}\in\mathbb{R}^{n\times k}$ of $\A^\top\A$, the algorithm also computes candidate eigenvector matrices $\H_i=\A\G_{i-1} =  \A\A^\top\H_{i-1}\in\mathbb{R}^{m\times k}$ of $\A\A^\top$. Since, in the context of GWAS, $\A\A^\top$ corresponds to the \enquote{classical} feature by feature covariance matrix, and $\A^\top\A$ to the sample covariance matrix, the algorithm computes left and right singular vectors at the same time. This means, the present algorithm is actually an SVD algorithm. In this article, we will sometimes refer to the left singular vector as the feature eigenvector and the right singular vector as the sample eigenvector.

\subsection{Federated principal component analysis for vertically partitioned data}

Only few algorithms are designed to perform federated computation of PCA on vertically partitioned siloed data sets \citep{Guo2012,Kargupta2001,Qi2003,Wu2018}. However, none of them is suitable for the GWAS use-case considered in this paper: The algorithms reviewed by \cite{Wu2018} are specialised for distributed sensor networks and use gossip protocols and peer-to-peer communication. Therefore, they are not suited for the intended FL architecture in the medical setting. The algorithms presented by \cite{Kargupta2001} and \cite{Qi2003} rely on estimating a proxy covariance matrix and consequently do not meet \Cref{constr:cov} introduced above. Unlike these approaches, the algorithm proposed by \cite{Guo2012} is covariance-free and suitable for the intended star-like architecture. However, it broadcasts the eigenvectors to all sites in violation of \Cref{constr:eig}.

\subsection{Federated matrix orthonormalization}

Matrix orthonormalization is a frequently used technique in many applications, including the solution of linear systems of equations and singular value decomposition. There are three main approaches: Householder reflection, Givens rotation, and the Gram-Schmidt algorithm. In distributed memory systems and grid architectures, tiled Householder reflection is a popular approach \citep{HadriQR, HommenQR}. However, those algorithms are often highly specialized to the compute system and rely on shared disk storage. For distributed sensor networks, Gram-Schmidt procedures relying on push-sum have been proposed \citep{Sluciak2016,Strakov2012}. However, these methods require peer-to-peer communication and are hence unsuitable for the intended star-like architecture. Consequently, no federated orthonormalization algorithm suitable for our setup is available. In \Cref{sec:algo:fed-gs}, we present our own version of a federated orthonormalization algorithm fulfilling all constraints and subsequently utilize it as a subroutine in our federated PCA algorithm.

\subsection{Federated principal component analysis for horizontally partitioned data}

Previously, federated PCA algorithms have been described for horizontal and vertical data partitioning. In the remainder of this article, we establish an algorithm which is capable of both, which allows us to borrow ideas from previously described algorithms for horizontally federated PCA. There are \enquote{single-round} approaches, where the eigenvectors are computed locally and sent to the aggregator \citep{Balcan_2014}. At the aggregator, a global subspace is approximated from the local eigenspaces. The higher the number of transmitted intermediate dimensions, the better the global subspace approximation. In these algorithms, the solution quality hence depends on the number of transmitted dimensions. This algorithm is a more memory efficient version of the naive algorithm [\eg \citep{Liu2020}], where the entire covariance matrix is processed by the aggregator. Since only the top $k$ left singular values are transmitted, this algorithm fulfills \cref{constr:cov}. 
Furthermore, iterative schemes have been proposed, where locally computed eigenvectors are sent to the aggregator, which performs an aggregation step and sends the obtained candidate subspace back to the clients \citep{Balcan2016,Chen2020, Imtiaz2018}. The candidate subspace is then refined iteratively. Furthermore, there are several schemes for specific applications such as streaming \citep{SanchezFernandez2015,Grammenos2020}. These approaches assume that an approximation of the entire eigenvectors is possible at the clients, or that the global covariance matrix can be approximated. As we have discussed above, these assumptions do not hold in the intended GWAS use case.

\subsection{Randomized principal component analysis}
In the context of GWAS, a randomized PCA algorithm \citep{Halko_2010, Galinsky2016} is popular as it speeds-up the computation compared to traditional algorithms. Here, we briefly present the version implemented by \cite{Galinsky2016}. The algorithm starts with $I^\prime$ iterations of subspace iteration on the full-dimensional data matrix, resulting in feature eigenvetor matrices $H_i$ for all iterations $i\in\{1,\ldots,I^\prime\}$. Next, the data is projected on the concatenation of all $\H_i$, forming approximate principal components which approximate the data matrices. Then, subspace iteration is performed on these proxy data matrices. In practice, $I^\prime=10$ iterations are sufficient. This reduces the dimensionality of the data from $m$ to $k\cdot I^\prime$. In \Cref{sec:fed-approx} we will present a fully federated version of this algorithm in detail. Note that this is a randomized approach, because subspace iteration on the full-dimensional data is initialized randomly. Since it is interrupted before convergence after $I^\prime$ iterations, the feature eigenvetor matrices $H_i$ inherit this randomness.


\section{Algorithms}\label{sec:algo}

In this section, we present a federated SVD algorithm, which is designed for a star-like architecture, meets the requirements of \Cref{constr:eig} and \Cref{constr:cov}, and is hence suitable for federated GWAS. Our base algorithm comes in two variants\,---\,with and without orthonormalization of the candidate right singular vectors of \A. In \Cref{sec:algo:fed-pca}, we describe our algorithm and prove that the version with orthonormalization is equivalent to centralized vertical subspace iteration algorithm \citep{Halko_2010}, which we have summarized in \Cref{alg:subspace-centralised} above. In \Cref{sec:algo:fed-gs}, we present a federated Gram-Schmidt algorithm, which can be used as a subroutine in our federated SVD algorithm to ensure that right singular vectors of \A remain at the local sites. Again, we prove that our federated Gram-Schmidt algorithm is equivalent to the centralized counterpart. We then show how approximate horizontal PCA can be used to compute approximate principal components for immediate use or as initialization for subspace iteration in \Cref{sec:fed-approx}. In \Cref{sec:algo:randomized}, we present randomized federated subspace iteration as a means to reduce the transmission cost in federated SVD. In addition to decreasing the communication cost, the use of the two latter strategies also prevents potential iterative leakage (detailed in \Cref{sec:leakage}). In \Cref{sec:algo:costs}, we analyze the network transmission costs of the proposed algorithms, and \Cref{sec:algo:summary} provides an overview of possible configurations of our federated SVD algorithm.

\subsection{Federated vertical subspace iteration}\label{sec:algo:fed-pca}
\Cref{alg:subspace-federated} describes our federated vertical subspace iteration algorithm: Initially, the first partial candidate right singular matrices $\G_0^s$ of $\A$ are generated randomly and orthonormalized (\crefrange{fed-sub:init-start}{fed-sub:init-end}). Inside the main loop, the left singular vectors are updated at the clients, summed up element-wise and orthonormalized at the aggregator, and then sent back to the clients (\crefrange{fed-sub:H-start}{fed-sub:H-end}). Next, the clients update the partial right singular vectors (\cref{fed-sub:update-G}). In the version with orthonormalization, the candidate right singular vectors are now normalized by calling the federated Gram-Schmidt orthonormalization (\cref{fed-sub:fed-ortho}) algorithm presented in \Cref{sec:algo:fed-gs} (\Cref{alg:qr-federated}). Note that this algorithm ensures that the partial singular vectors $\G_i^s$ remain at the local sites. Finally, the full left singular matrices $\H$ and the orthonormalized partial right singular matrices $\G_s$  are returned to the clients (\cref{fed-sub:return}). In practice, the federated orthonormalization of $\G_i$ (\cref{fed-sub:fed-ortho}) may be omitted to speed up computation. Note, however, that $\H_i$ is still orthonormalized in every iteration and that the final orthonormalization (\cref{fed-sub:fed-ortho-mandatory}) is required.

\newcommand\tikzmk[1]{%
	\tikz[remember picture,overlay]\node[inner sep=2pt] (#1) {};}
\newcommand\boxit[2][]{\tikz[remember picture,overlay]{\node[yshift=4pt, xshift=4pt, fill=#1,opacity=.3,fit={(A)($(B)+(#2\linewidth,.8\baselineskip)$)}] {};}\ignorespaces}

\setlength{\fboxsep}{1pt}
\colorlet{royal}{gray!70}

\begin{algorithm}[ht]
	\small
	\caption{Federated vertical subspace iteration. \textcolor{gray!90}{(Partial) client-side computations are marked in gray.}}
	\label{alg:subspace-federated}
	\KwInput{Partial data matrices $\A^s\in \mathbb{R}^{m \times n^s}$ at sites $s\in[S]$, number of eigenvectors $k$, number of iterations $I$ and/or convergence threshold $\epsilon$.}
	\KwOutput{Partial right singular matrices $\G^s \in \mathbb{R}^{n^s \times k}$ and full left singular matrices $\H \in \mathbb{R}^{m \times k}$ of $\A$ at sites $s\in[S]$.}
	\tikzmk{A}
	\If{use approximate initialisation}{
		 \tcp{Call subroutine \Cref{alg:approx-smpc} (\FEDAI).} 
 	$\G^s_0 \gets \initApprox()$ \label{fed-sub:init-approx}}
\Else{
	\tcp{Use random initialisation if no initial eigenvector is available (\FEDRI).}
	\lFor{$s\in[S]$}{generate $\G_0^s \in \mathbb{R}^{n^s \times k}$ \label{fed-sub:init-start} randomly}
	\tcp{Use approach described in \Cref{alg:qr-federated} (\FEDGS).}
	\lIf{use orthonormalization}{ $\fedOrthonormalize([\G_0^s])$\label{fed-sub:orthonormalize}\label{fed-sub:init-end}}}
	\tikzmk{B}
	\boxit[royal]{0.94}
	$i\gets 1$\;
	\tcp{Suggested criterion: $i\geq I$ or convergence as specified in \cref{eq:convergence-power}.}
	\While {termination criterion not met}{
		\tikzmk{A}
		\tcp{Update left singular matrix of $A$.}
		\lFor{$s\in[S]$}{$\H^s_{i} \gets \A^s\G^s_{i-1}$\label{fed-sub:H-start}}
		\tikzmk{B}
		\boxit[royal]{0.89}
		$\H_i \gets \sum_{s=1}^S\H_{i}^s$\label{fed-sub:aggregate-H}\;
		$\H_i \gets \orthonormalize(\H_i)$\label{fed-sub:H-ortho}\label{fed-sub:H-end}\;
		\tikzmk{A}
		\tcp{Update partial right singular matrices of $\A$.}
		\lFor{$s\in[S]$}{$\G_i^s \gets {\A^s}^\top \H_i$\label{fed-sub:update-G}}
		\tcp{Use approach described in \Cref{alg:qr-federated} (\FEDGS).}
		\lIf{use orthonormalization}{$\fedOrthonormalize([\G_i^s])$\label{fed-sub:fed-ortho}}
		\tikzmk{B}
		\boxit[royal]{0.89}
		$i\gets i+1$\;}
	\tikzmk{A}
	\For{$s\in[S]$}{$\G^s\gets \G_i^s$\;}
	$\G^s \gets \fedOrthonormalize([\G^s])$\label{fed-sub:fed-ortho-mandatory}\;
	\tikzmk{B}
	\boxit[royal]{0.94}
	\Return $\G^s, \H$\label{fed-sub:return}\; 
\end{algorithm}

Like the original centralized version described in \Cref{alg:subspace-centralised} above, our algorithm can be run with various termination criteria. In our implementation, we use the convergence criterion 
\begin{equation}
\diag(\H_i^\top\H_{i-1}) \geq \mathbf{1}_k - \epsilon
\label{eq:convergence-power}   
\end{equation}
using the angle as a global measure as suggested by \cite{Lei2016}, where $\mathbf{1}_k$ is the $k$-dimensional vector of ones and $\epsilon$ is a small positive number. With this criterion, the algorithm terminates once all right singular vectors of $\A$ are asymptotically collinear with respect to the eigenvectors of the previous iteration. Other convergence criteria could be used as drop-in replacements.

We now prove that the version of \Cref{alg:subspace-federated} with orthonormalization is equivalent to the centralized version described in \Cref{alg:subspace-centralised}. Thus, it inherits its convergence behavior from the centralized version. Details on the convergence behavior of centralized vertical subspace iteration can be found in the original publication by \cite{Halko_2010}.

\begin{proposition}
\label{prop:fed-pca}
If orthonormalization is used, centralized and federated vertical subspace iteration are equivalent.
\end{proposition}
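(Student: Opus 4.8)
The plan is to prove the equivalence by induction on the iteration counter $i$, showing that at every iteration the matrices produced at the aggregator by \Cref{alg:subspace-federated} coincide with those of \Cref{alg:subspace-centralised}, and that the partial matrices $\G_i^s$ held at the sites are exactly the row-blocks of the centralized $\G_i$ induced by the partition $\A=[\A^1\dots\A^S]$. Formally, I would first couple the random choices so that the vertical concatenation $\stackVert(\G_0^1,\dots,\G_0^S)$ equals the matrix $\G_0$ drawn in \cref{cent:init-start} of \Cref{alg:subspace-centralised}; the claim to be established is then $\stackVert(\G_i^1,\dots,\G_i^S)=\G_i$ and $\H_i^{\mathrm{fed}}=\H_i$ for all $i$, from which the statement about the returned $\H$ and $\G^s$ follows immediately.

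For the base case, after \cref{fed-sub:orthonormalize} the federated algorithm holds $\fedOrthonormalize([\G_0^s])$, whose vertical concatenation equals $\orthonormalize(\G_0)$ by the equivalence of federated and centralized Gram-Schmidt proved in \Cref{sec:algo:fed-gs} (\Cref{alg:qr-federated}); this matches \cref{cent:init-end}. For the inductive step, assume $\stackVert(\G_{i-1}^1,\dots,\G_{i-1}^S)=\G_{i-1}$. The key algebraic observation is the block identity
\begin{equation}
\A\G_{i-1}=[\A^1\dots\A^S]\begin{pmatrix}\G_{i-1}^1\\\vdots\\\G_{i-1}^S\end{pmatrix}=\sum_{s=1}^S\A^s\G_{i-1}^s\text{,}
\end{equation}
so the aggregation in \cref{fed-sub:aggregate-H} produces exactly the centralized $\H_i=\A\G_{i-1}$ of \cref{cent:while-start}; since both algorithms then apply the same centralized routine $\orthonormalize$ to this identical matrix at the aggregator, $\H_i^{\mathrm{fed}}=\H_i$ after \cref{fed-sub:H-ortho}. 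Dually, because $\A^\top$ is the vertical stack of the ${\A^s}^\top$, the site updates in \cref{fed-sub:update-G} satisfy $\stackVert({\A^1}^\top\H_i,\dots,{\A^S}^\top\H_i)=\A^\top\H_i=\G_i$, and applying $\fedOrthonormalize$ in \cref{fed-sub:fed-ortho}---again invoking the Gram-Schmidt equivalence, including the unit-norm scaling---yields partial matrices whose concatenation equals $\orthonormalize(\G_i)$, matching \cref{cent:G-update,cent:while-end}.

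Finally I would check that the loop control agrees: the convergence test in \cref{eq:convergence-power} is evaluated at the aggregator from $\H_i$ and $\H_{i-1}$, which by the induction hypothesis are identical in both settings, so the two algorithms execute the same number of iterations; and the mandatory final orthonormalization in \cref{fed-sub:fed-ortho-mandatory} again matches the centralized output by the Gram-Schmidt equivalence. Hence the returned $\H$ is identical and the returned $\G^s$ are precisely the row-blocks of the centralized $\G$. The main obstacle---and the only step that is not pure block-matrix bookkeeping---is the orthonormalization: the whole argument hinges on the fact that $\fedOrthonormalize$ applied to a vertically partitioned matrix returns exactly the row-blocks of the centralized Gram-Schmidt output, so the proof is only complete once the equivalence of \Cref{alg:qr-federated} with centralized Gram-Schmidt has been established. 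A secondary point worth stating explicitly is the harmless assumption that both algorithms start from the same (coupled) random initialization, since otherwise \enquote{equivalence} can only hold up to the choice of starting subspace.
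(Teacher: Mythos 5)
Your proposal is correct and follows essentially the same route as the paper's own proof: induction on the iteration counter with coupled random initialization, the block identity $\A\G_{i-1}=\sum_{s=1}^S\A^s\G_{i-1}^s$ for the $\H$-update, the dual row-block identity for the $\G$-update, and an appeal to the equivalence of federated and centralized Gram-Schmidt (\Cref{prop:fed-gs}) to carry the identities through the orthonormalization steps. Your additional remarks on the termination criterion and the final orthonormalization are consistent with, and slightly more explicit than, the paper's argument.
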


\begin{proof}
Let $\G_i$ and $\H_i$ denote the eigenvector matrices maintained by the centralized algorithm described in \Cref{alg:subspace-centralised} at the end of the main while-loop, and $\G_i^s$ be the sub-matrix of $\G_i$ for the samples available at site $s$. Moreover, let $\tilde{\H}_i$, $\tilde{\G}_i$, $\tilde{\G}^s_i$, and $\tilde{\H}^s_i$ be the (partial) eigenvector matrices maintained by our federated \cref{alg:subspace-federated} at the end of the main while-loop. We will show by induction on the iterations $i$ that $\H_i=\tilde{\H}_i$ and $\G_i^s=\tilde{\G}_i^s$ for all $s\in[S]$ holds throughout the algorithm, if the same random seeds are used for initialization.

For $i=0$, we only have to show $\G_0^s=\tilde{\G}_0^s$. This directly follows from \cref{prop:fed-gs} and our assumption that the same random seeds are used for initialization. For the inductive step, note that, before orthonormalization in \cref{fed-sub:H-ortho}, we have $\tilde{\H}_i=\sum_{s=1}^S\tilde{\H}_i^s=\sum_{s=1}^S\A^s\tilde{\G}_{i-1}^s=\sum_{s=1}^S\A^s\G_{i-1}^s=\A\G_{i-1}=\H_i$, where the third equality follows from the inductive assumption. Because of \Cref{prop:fed-gs}, this identity continues to hold at the end of the main while-loop.

Similarly, after updating in \cref{fed-sub:update-G} but before orthonormalization, we have $\tilde{\G}^s_i={\A^s}^\top\tilde{\H}_i={\A^s}^\top\H_i=(\A^\top\H_i)^s=\G_i^s$, where the second equality follows the identity $\H_i=\tilde{\H}_i$ shown above and $(\A^\top\H_i)^s$ denotes the sub-matrix of $\A^\top\H_i$ for the samples available at site $s$. Again, \cref{prop:fed-gs} ensures that the identity continues to hold after orthonormalization.
\end{proof}

The omission of the orthonormalization of $\G_i$ (\cref{fed-sub:orthonormalize} and \cref{fed-sub:fed-ortho} in \Cref{alg:subspace-federated}) removes provable identity to  \cref{alg:subspace-centralised}. However, other formulations of centralized power iteration exist which directly operate on the covariance matrix \citep{Balcan2016}. In these schemes, instead of splitting the iteration into $\H_i$ update (\cref{cent:while-start}, \cref{alg:subspace-centralised}) and $\G_i$ update (\cref{cent:G-update}, \cref{alg:subspace-centralised}), the covariance matrix is computed and $\H_i$ is updated as $\H_i=\A\A^\top\H_{i-1}$ at every iteration. \Cref{prop:fed-pca} can be formulated and proven analogously for this version.

\subsection{Federated Gram-Schmidt algorithm}\label{sec:algo:fed-gs}
Here, we describe federated Gram-Schmidt orthonormalization for vertically partitioned column vectors. Previous federated PCA algorithms require the complete eigenvectors to be known at all sites for the orthonormalization procedure. The na\"ive way of orthonormalizing the eigenvector matrices in a federated fashion would be to send them to the aggregator which performs the aggregation and then sends the orthonormal matrices back to the clients. However, in this na\"ive scheme, the transmission cost scales with the number of variables (individuals in GWAS) and all eigenvectors are known to the aggregator.

To address these two problems, we suggest a federated Gram-Schmidt orthonormalization procedure, summarized in \Cref{alg:qr-federated}. The algorithm exploits the fact that the computations of the squared norms $n_i$ and of the residuals $r_{ij}$ can be decomposed into independent computations of summands $n_i^s$ and $r_{ij}^s$ computable at the local sites $s\in[S]$. The clients compute the local summands and send them to the aggregator, where the squared norm of the first orthogonal vector is computed and sent to the clients (\crefrange{gs-fed:init-start}{gs-fed:init-end}). Subsequently, the remaining $k-1$ vectors are orthogonalized. For the $i$\textsuperscript{th} vector $\v_i$, the algorithm computes the residuals $r_{ij}$ \wrt all already computed orthogonal vectors $\u_j$, using the fact that the corresponding squared norms $n_j$ are already available (\crefrange{gs-fed:res-start}{fed-gs:res-end}). The residuals are aggregated by the central server (\crefrange{gs-fed:aggregated-res-start}{gs-fed:aggregated-res-end}). Next, $\v_i$ is orthogonalized at the clients,  the local norms are computed (\crefrange{gs-fed:norm-start}{gs-fed:norm-end}), and the squared norm of the resulting orthogonal vector $\u_i$ is computed at the aggregator and sent back to the clients (\cref{gs-fed:aggregate-norm}). After orthogonalization, all orthogonal vectors are scaled to unit norm at the clients (\crefrange{gs-fed:scale-start}{gs-fed:scale-end}).

\begin{algorithm}[ht]
	\small
	\caption{Federated Gram-Schmidt. \textcolor{gray!90}{Client-side computations are marked in gray.}}
	\label{alg:qr-federated}
	\KwInput{Data matrices $\V_s$ at sites $s\in[S]$.}
	\KwOutput{Orthonormalized data matrices $\U_s$ at sites $s\in[S]$.}
	\tikzmk{A}
	\tcp{Compute squared norm of first orthogonal vector.}
	\For{$s\in[S]$\label{gs-fed:init-start}}
	{$\u^s_1 \gets \v^s_1$\; 
		$n_1^s \gets {\u_1^s}^\top{\u_1^s}$\;}
	\tikzmk{B}
	\boxit[royal]{0.94}
	$n_1 \gets \sum_{s=1}^S n_1^s$ \; \label{gs-fed:init-end}
	\tcp{Orthogonalize all subsequent vectors.}
	\For{$i \in [k]\setminus\{1\}$}{
		\tikzmk{A}
		\tcp{Compute client residuals for vector being orthogonalized.}
		\For{$s\in[S]$\label{gs-fed:res-start}}{\For{$j \in[i-1]$}{
				$r_{ij}^s \gets {\u_j^s}^\top \v_i^s / n_j $\; \label{fed-gs:res-end}
		}}
		\tikzmk{B}
		\boxit[royal]{0.89}
		\tcp{Compute global residuals for vector being orthogonalized.}
		\For{$j\in[i-1]$\label{gs-fed:aggregated-res-start}}{
			$r_{ij} \gets \sum_{s=1}^Sr_{ij}^s$\; \label{gs-fed:aggregated-res-end}
		}
		\tikzmk{A}
		\tcp{Orthogonalize the vector and compute squared norm.}
		\For{$s\in[S]$\label{gs-fed:norm-start}}{
			$\u_{i}^s \gets \v_{i}^s - \sum_{j =1}^{i-1} r_{ij} \cdot \u_{j}^s$\label{gs-client:orth}\;
			$n_i^s \gets {\u_i^s}^\top \u_i^s$\label{gs-fed:norm-end}
		}\tikzmk{B}
		\boxit[royal]{0.89}
		\tcp{Compute squared norm of orthogonalized vector.}
		$n_i \gets \sum_{s=1}^S n_i^s$ \; \label{gs-fed:aggregate-norm}
	}
	\tikzmk{A}
	\tcp{After orthogonalization, scale all $k$ vectors to unit norm.}
	\For{$s\in[S]$\label{gs-fed:scale-start}}{
		\lFor{$i \in [k]$}{$\u_i^s \gets \frac{1}{\sqrt{n_i}}\cdot\u_i^s$}
		$\U^s\gets[\u_1^s\dots \u^k_s]$\;
		 \label{gs-fed:scale-end}} \tikzmk{B}\boxit[royal]{0.94}
	 \Return $\U^s$\;
\end{algorithm}

\begin{proposition}
\label{prop:fed-gs}
Centralized and federated Gram-Schmidt orthonormalization are equivalent.

\end{proposition}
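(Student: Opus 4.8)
The plan is to prove the statement by induction on the vector index $i\in[k]$, showing simultaneously that (i) the vector obtained by stacking the local outputs $\u_i^s$ of \Cref{alg:qr-federated} equals the $i$th orthogonal vector $\u_i$ produced by the centralized recursion \cref{eq:qr-proj}, and (ii) the scalar $n_i$ aggregated on \cref{gs-fed:aggregate-norm} equals the centralized squared norm $n_i=\u_i^\top\u_i$. The one fact that drives everything is the block decomposition of inner products: if a full vector is vertically partitioned as $\v=[\,\v^1;\dots;\v^S\,]$ and likewise $\u$, then $\u^\top\v=\sum_{s=1}^S{\u^s}^\top\v^s$. Hence both the squared norms $n_i$ and the Gram-Schmidt coefficients $r_{ij}=\u_j^\top\v_i/n_j$ split additively over the sites, which is precisely the decomposition ($n_i=\sum_s n_i^s$, $r_{ij}=\sum_s r_{ij}^s$) that the federated algorithm exploits.

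For the base case $i=1$, \Cref{alg:qr-federated} sets $\u_1^s\gets\v_1^s$ at every site, so the stacked vector is $\v_1=\u_1$, matching \cref{eq:qr-proj}; and $\sum_s n_1^s=\sum_s{\u_1^s}^\top\u_1^s=\u_1^\top\u_1=n_1$. For the inductive step, assume (i) and (ii) hold for all $j<i$. Using the inductive hypothesis that the aggregated $n_j$ agrees with the centralized one, together with the block decomposition, the aggregated coefficient on \cref{gs-fed:aggregated-res-end} satisfies $\sum_s r_{ij}^s=\sum_s{\u_j^s}^\top\v_i^s/n_j=\u_j^\top\v_i/n_j=r_{ij}$ for each $j<i$. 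Consequently, stacking the local updates on \cref{gs-client:orth} gives $[\,\u_i^1;\dots;\u_i^S\,]=[\,\v_i^1;\dots;\v_i^S\,]-\sum_{j=1}^{i-1}r_{ij}\,[\,\u_j^1;\dots;\u_j^S\,]=\v_i-\sum_{j=1}^{i-1}r_{ij}\u_j=\u_i$, and $\sum_s n_i^s=\sum_s{\u_i^s}^\top\u_i^s=\u_i^\top\u_i=n_i$, which closes the induction. The concluding rescaling on \crefrange{gs-fed:scale-start}{gs-fed:scale-end} maps each local piece $\u_i^s\mapsto\u_i^s/\sqrt{n_i}$ using the very same $n_i$ used centrally, so the stacked columns of the returned $\U^s$ are exactly the unit-norm vectors $\u_i/\sqrt{n_i}$, \ie the output of centralized Gram-Schmidt.

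I do not expect a genuinely hard step: the proof is essentially bookkeeping over the block structure. The only points that need care are (a) that the induction hypothesis must include agreement of the squared norms, not merely of the vectors, because the federated residual $r_{ij}^s$ on \cref{fed-gs:res-end} already divides by the aggregated $n_j$; and (b) that one must observe that both the centralized recursion \cref{eq:qr-proj} and \Cref{alg:qr-federated} form their projections from the \emph{unnormalized} orthogonal vectors $\u_j$ (with $n_j=\u_j^\top\u_j$), deferring the scaling to unit norm to the very end, so that the two coefficient systems are literally identical rather than merely proportional.
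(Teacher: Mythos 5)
Your proof is correct and follows essentially the same route as the paper's: an induction on the vector index $i$ establishing simultaneously that the stacked local unnormalized vectors, the aggregated residuals $r_{ij}$, and the aggregated squared norms $n_i$ coincide with their centralized counterparts, using the additive block decomposition of inner products over sites, with the final unit-norm scaling handled identically. The care points you flag (including the norms in the induction hypothesis, and working with unnormalized $\u_j$ until the end) are exactly how the paper's argument is structured.
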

\begin{proof}
Let $\V=[\v_1\dots\v_k]$ be the matrix that should be orthonormalized, $\v_i^s$ be the restriction of the $i$\textsuperscript{th} columns vector to the samples available at side $s$, and $\u_i^s$ be the restriction of the $i$\textsuperscript{th} orthogonal vector computed by the centralized Gram-Schmidt algorithm before normalization to the samples available at site $s$. Moreover, let $n_i$ and $r_{i,j}$ be the centrally computed norms and residuals, and $\tilde{n}_i$, $\tilde{r}_{i,j}$, and $\tilde{\u}_i^s$ be the locally computed norms, residuals, and partial orthogonal vectors before normalization. We show by induction on $i$ that $n_i=\tilde{n}_i$, $r_{ij}=\tilde{r}_{ij}$, and $\u_i^s=\tilde{\u}_i^s$ holds for all $i\in[k]$ and all $j\in[i-1]$. This implies the proposition.

For $i=1$, we have $\u_1^s=\v_1^s=\tilde{\u}_1^s$ and $n_1=\u_1^\top\u_1=\sum_{s=1}^S{\u_1^s}^\top \u_1^s=\sum_{s=1}^S{\tilde{\u}^{s\top}_1}\tilde{\u}_1^s=\tilde{n}_1$. For the inductive step, note that $r_{ij}={\u_j}^\top\v_i/n_j=\sum_{s=1}^S{\u_j^s}^\top \v_i^s/n_j=\sum_{s=1}^S\tilde{\u}_j^{s\top}\v_i^s/\tilde{n}_j=\tilde{r}_{ij}$, where the third identity follows from the inductive assumption. Moreover, we have $\u_{i}^s=\v_{i}^s-\sum_{j=1}^{i-1}r_{ij}\cdot \u_{j}^s=\v_{i}^s-\sum_{j=1}^{i-1}\tilde{r}_{ij}\cdot\tilde{\u}_{j}^s=\tilde{\u}_{i}^s$, where the second identity follows from the inductive assumption and the identities $r_{ij}=\tilde{r}_{ij}$ established before. We hence obtain $n_i=\u_i^\top\u_i=\sum_{s=1}^S{\u_i^s}^\top \u_i^s=\sum_{s=1}^S{\tilde{\u}^{s\top}_i}\tilde{\u}_i^s=\tilde{n}_i$, which completes the proof.
\end{proof}


\subsection{Approximate initialization} \label{sec:fed-approx}
One major concern of iterative PCA is information leakage through the repeated transmission of updated eigenvectors. This is presented in more detail in \Cref{sec:leakage}, because knowledge of the subspace iteration algorithm is required to understand the attack. Briefly, the conclusion is that the number of iterations needs to be strictly limited. Therefore, we suggest to use federated approximate horizontal PCA as an initialization strategy to limit the number of iterations, and thereby prevent the possible leakage of the covariance matrix.

\cite{Balcan_2014} presented a memory efficient version of federated approximate PCA for horizontally partitioned data. We provide a minor modification which allows us to compute the sample eigenvectors. The algorithm can be used \enquote{as is} to compute the federated approximate vertical PCA by projecting the approximate left eigenvector to the data; or as an initialization strategy for federated subspace iteration. For the latter, instead of initializing $\G^s_0$ randomly (\cref{fed-sub:init-start}, \Cref{alg:subspace-federated}), $\G^s_0$ is computed using the approximate algorithm described here (\cref{fed-sub:init-approx}, \Cref{alg:subspace-federated}). 

\Cref{alg:approx-smpc} describes this approach. The algorithm proceeds as follows: At the clients, a local PCA is computed and the top $2k$ eigenvectors are shared with the aggregator with $c$ a constant multiplicative factor (\cref{approx:init}). At the aggregator, the local eigenvectors are stacked such that a new approximate covariance matrix $\hat{{\M}}$ with $\dim(\hat{{\M}})= c \cdot k \cdot S\times m$ is formed. $\hat{{\M}}$ is then decomposed using singular value decomposition leading to a new eigenvector estimate $\hat{{\H}}$ (\crefrange{approx:agg-start}{approx:agg-end}). At the clients, the feature eigenvector estimate $\hat{{\H}}$ can be projected onto the data to form an approximation of the sample eigenvector $\hat{\G}_s$. The vectors $\hat{\G}$ and $\hat{\H}$ represent an \enquote{educated guess} of the final singular vectors.

\begin{algorithm}[ht]

 \caption{Slightly modified federated horizontal SVD   \citep{Balcan_2014}. Referred to as \AIONLY in this article.}
 \label{alg:approx-smpc}

 \KwInput{Data matrices $\A_s\in\mathbb{R}^{m \times n}$ at sites $s \in [S]$, number of eigenvectors $k$, constant approximation factor $c$.}
 \KwOut{Approximate singular vector matrices $\hat{\G}_s \in \mathbb{R}^{n_s \times k}$ and $\hat{\H} \in \mathbb{R}^{m \times k}$ of $\A$.}
\tikzmk{A}
\For{$s\in[S]$}{
\tcp{Retrieve top $k \cdot c$ eigenvectors.}
 $\H_s,\boldsymbol{\Sigma}_s, \G_s \gets \SVD(\A_s, c \cdot k)$\; \label{approx:init}
}
\tikzmk{B}
\boxit[royal]{0.94}
\tcp{Aggregate local subspaces to obtain approximate covariance matrix $\hat{{\M}}$ with $\dim(\hat{{\M}})= c \cdot k \cdot S\times m$.}
 $\hat{{\M}} \gets \stackVert ([\H^\top_s])$ \label{approx:agg-start}\;
 \tcp{Use final dimensionality $k$}
 $\hat{{\H}} \gets \SVD(\hat{{\M}}, k)$\; \label{approx:agg-end}
\tikzmk{A}
\For{$s\in[S]$}{$\hat{{\G}}_s \gets \A_s^{\top} \hat{{\H}}$\; \label{approx:project}}
\tikzmk{B}
\boxit[royal]{0.94}
\tcp{Return approximate singular vector matrices of $\A$}
\Return $\hat{{\G}}_s,\hat{{\H}}$\label{approx:return}

 \end{algorithm}

\subsection{Federated randomized principal component analysis}\label{sec:algo:randomized}
Another mitigation strategy for the aforementioned information leakage is the use of randomized SVD. In randomized SVD, a reduced representation of the data is computed and subspace iteration is applied on this reduced data matrix instead of the full data. By using the proxy data, only \enquote{reduced} eigenvectors become available at the aggregator which makes the attack in \Cref{sec:leakage} impossible given not too many initial iteration $I^\prime$ have been executed. Notably, $I^\prime$ needs to be restricted depending on the number of features in the original data.
Here, we describe how to modify randomized SVD, such that it can be run in a federated environment, without sharing the random projections of the data or the sample eigenvectors. 

We proceed according to \cite{Halko_2010} and \cite{Galinsky2016}. First, $I^\prime$ iterations of federated vertical subspace iteration are run using the full data matrices $\A_s$. In order to do so, \Cref{alg:subspace-federated} is called as a subroutine. The intermediate matrices $\H_1,\ldots,\H_I^\prime$ are stored (\cref{rand:init-subs}) and concatenated to form $\P \in \mathbb{R}^{k\cdot I^\prime \times m}$ (\cref{rand:concat}). The data matrices $\A_s$ are then projected onto $\P$ to form proxy data matrices $\hat{{\A}}_s \in \mathbb{R}^{k\cdot I^\prime \times n}$ (\cref{rand:approx}). Finally, the covariance matrix of the proxy data matrix is computed as $\hat{{\A}}_s\hat{{\A}}_s^\top \in \mathbb{R}^{k\cdot I^\prime \times k\cdot I^\prime}$ at the clients. The clients send this covariance to the aggregator which aggregates the covariance matrices by element wise addition $M_{\hat{\A}} = \sum_{s} \hat{{\A}}_s\hat{{\A}}_s^\top$, computes the eigenvectors $\G_a$ and shares them with the clients. The eigenvectors $\G_{\hat{\A}}  \in \mathbb{R}^{k\cdot I^\prime \times k}$ do not reflect properties of the original data but they can be used to recompute $\G$ as $\G= \hat{{\A}}_s^\top \G_{\hat{\A}}  \in \mathbb{R}^{n \times k}$. $\G$ needs to be normalized using the federated orthonormalization subroutine (\cref{alg:qr-federated}). The subroutine returns the correct right singular vectors $\G$ but only proxy vectors for $\H$. Therefore, in the last step $\H$ can be reconstructed by projecting the data onto $\G$, aggregating and normalizing $\H_s$ at the aggregator and returning the final left singular vectors $\H$ to the clients (\crefrange{rand:H-start}{rand:H-end}). 

We would like to highlight two properties of this algorithm. Firstly, given that $m>I/k$, it is not possible to construct the covariance matrix using the initial eigenvector updates (see \Cref{sec:leakage}). Secondly, since the computation of the final right singular vectors utilizes the projected data matrices, the original covariance matrix is not disclosed exactly. However, as the final left eigenvectors are a common result of the analysis, the covariance matrix can still be approximated closely.

 \begin{algorithm}[ht]
	\caption{Federated randomized SVD (\FEDRANDRI)}
	\label{alg:approx-random}
	\KwInput{Data matrices $\A_s\in\mathbb{R}^{m \times n}$ at sites $s \in [S]$, number of eigenvectors $k$, number of intermediate iterations $I'$, number of total iterations $I$.}
	\KwOutput{Singular vector matrices $\H \in \mathbb{R}^{m \times k}$ and partial $\G_s \in \mathbb{R}^{n_s \times k}$ of $\A$.}
	\tcp{Run $I'$ iterations of \Cref{alg:subspace-federated} and store $H_i$.}
	 $[\H_1, ..., \H_{I^{'}} ]  \gets \fedSub([\A_s], k, I')$ \label{rand:init-subs}\; 
	 $\P \gets \stackVert([\H_1^\top, ..., \H_{I^{'}}^\top ])$ \;  \label{rand:concat}
	\tikzmk{A}
	\For{$s\in[S]$} {
	 $\hat{\A}_s \gets \P\A_s$\; \label{rand:approx} 
	 $\M_{\hat{\A}}^s = \hat{{\A}}_s\hat{{\A}}_s^\top$ \label{rand:cov}
	}
	\tikzmk{B}
	\boxit[royal]{0.94}
	 $\M_{\hat{\A}} = \sum_{s} \hat{{\A}}_s\hat{{\A}}_s^\top$\;
	 $\G_{\hat{\A}} \gets \SVD(\M_{\hat{\A}}, k)$\; 
	 
	 \tikzmk{A}
	 {\tcp{Reconstruct the partial right singular vector}
	 $\G_s \gets \hat{{\A}}_s^\top \G_{\hat{\A}} $\;
	 }
	 \tikzmk{B}
	\boxit[royal]{0.94}
	 \tcp{Reorthogonalize the partial right singular vector}
	  $\G_s  \gets \fedOrthonormalize([\hat{\G}_s], k, I)$\;\label{rand:full-subs}
	 \tikzmk{A}
	 \tcp{Compute the full left singular vectors of $A$}
	 \lFor{$s\in[S]$}{$\H_s \gets \A_s\G_s$\label{rand:H-start}}
	 \tikzmk{B}
	 \boxit[royal]{0.89}
	 $\H \gets \sum_{s=1}^S\H_s$\label{rand:aggregate-H}\;
	 $\H \gets \orthonormalize(\H)$\label{rand:H-end}\;
	\tcp{Return singular vector matrix of $\A$.}
	\Return $\G_s, \H$\label{rand:return}
\end{algorithm}

\subsection{Network transmission costs}\label{sec:algo:costs}
The main bottleneck in FL is the amount of data transmitted between the different sites and the number of network communications and the volume of transmitted date \citep{Kairouz2021}. The following \Cref{prop:costs} specifies these quantities for our federated PCA algorithm. Recall that $S$, $k$, $m$, $n$, and $c$ denote, respectively, the numbers of sites, eigenvectors, features, samples, and a constant multiplicative factor.

\begin{proposition}
	
\label{prop:costs}
Let $\mathcal{D}$ be the total amount of data transmitted by the federated SVD algorithm, $\mathcal{N}$ be the total number of network communications, and $I$ be the total number of iterations of the main while-loop. Let further $I'$ be the number of inital iteration for randomized SVD and $k'$ the intermediate dimensionality of the subspace for the approximate algorithm. Then the following statements hold:	

\begin{itemize}
\item If the $\G_i$ matrices are not orthonormalized, then $\mathcal{D}=\mathcal{O}(I\cdot S\cdot k\cdot m)$ and $\mathcal{N}=\mathcal{O}(I\cdot S)$.
\item If federated Gram-Schmidt orthonormalization is used, then $\mathcal{D}=\mathcal{O}(I\cdot (S\cdot k\cdot m + k^2))$ and $\mathcal{N}=\mathcal{O}(I\cdot S\cdot k)$.
\item If federated randomized subspace iteration is used, then  $\mathcal{O}((S\cdot (I'+1) \cdot k \cdot m)+(k \cdot I')^2)$ and $\mathcal{N}=\mathcal{O}((I'+ 2)\cdot S)$. 
\item Approximate initialization itself has a complexity of $\mathcal{D}=\mathcal{O}(S\cdot k \cdot c \cdot m)$ and $\mathcal{N}=\mathcal{O}(S)$, hence the other algorithms remain in the same complexity class if used in combination with approximate initialization.
\end{itemize}
\end{proposition}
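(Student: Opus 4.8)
The plan is to prove all four bullets by a direct accounting of the network traffic generated by \Cref{alg:subspace-federated}, \Cref{alg:qr-federated}, \Cref{alg:approx-smpc}, and \Cref{alg:approx-random}. For each algorithm I would first isolate the lines that actually cause a transmission---precisely the ones in which a gray, client-side quantity is consumed by an aggregator-side line, together with the aggregator's broadcasts of aggregated quantities back to the clients---and discard everything computed purely locally (the updates $\H_i^s\gets\A^s\G_{i-1}^s$ and $\G_i^s\gets{\A^s}^\top\H_i$, the projections $\hat{\A}_s\gets\P\A_s$ and $\hat{\G}_s\gets\A_s^\top\hat{\H}$, the aggregator-side orthonormalization of $\H_i$, the local SVDs, the final per-client scaling in Gram-Schmidt, and so on). For each surviving transmission I would record how many messages it produces as a function of $S$ and the size of a single message, and then multiply by the appropriate loop length ($I$, $I'$, or $k$) and sum.

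For the first bullet, with the optional per-iteration Gram-Schmidt on $\G_i$ switched off, the only transfers inside the main while-loop of \Cref{alg:subspace-federated} are the $S$ uploads of $\H_i^s\in\mathbb{R}^{m\times k}$ and the $S$ broadcasts of the aggregated $\H_i\in\mathbb{R}^{m\times k}$; this is $\mathcal{O}(S\cdot k\cdot m)$ data in $\mathcal{O}(S)$ messages per iteration, hence $\mathcal{D}=\mathcal{O}(I\cdot S\cdot k\cdot m)$ and $\mathcal{N}=\mathcal{O}(I\cdot S)$ over $I$ iterations; the single mandatory closing federated Gram-Schmidt call is dominated because $m\geq k$ and at least one main-loop iteration is performed. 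For the second bullet I would add the cost of one federated Gram-Schmidt call per iteration: walking through \Cref{alg:qr-federated}, orthogonalizing the $i$-th vector exchanges the $i-1$ residuals $r_{ij}$ together with the single scalar $n_i$ in a constant number of rounds, each involving all $S$ sites, so one call costs $\sum_{i=2}^{k}\mathcal{O}(i)=\mathcal{O}(k^2)$ transmitted scalars in $\mathcal{O}(k)$ rounds; adding this to the $\H$-update cost and summing over $I$ iterations gives $\mathcal{D}=\mathcal{O}(I\cdot(S\cdot k\cdot m+k^2))$ and $\mathcal{N}=\mathcal{O}(I\cdot S\cdot k)$.

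For the third bullet I would split \Cref{alg:approx-random} into four phases and apply the previous counts phase by phase: the $I'$ iterations of the (un-orthonormalized) main loop of \Cref{alg:subspace-federated}, which cost $\mathcal{O}(I'\cdot S\cdot k\cdot m)$ data and $\mathcal{O}(I'\cdot S)$ messages and as a by-product make $\H_1,\dots,\H_{I'}$ available to the clients without further transmission; one round in which every client uploads its proxy covariance $\hat{\A}_s\hat{\A}_s^{\top}\in\mathbb{R}^{kI'\times kI'}$ and receives $\G_{\hat{\A}}\in\mathbb{R}^{kI'\times k}$, which is where the $\mathcal{O}((kI')^2)$ term enters; the final federated Gram-Schmidt on the $\hat{\G}_s$; and one closing round that reconstructs $\H$ by aggregating and orthonormalizing $\sum_s\A_s\G_s$, contributing a further $\mathcal{O}(S\cdot k\cdot m)$ and accounting for the \enquote{$+1$}. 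Collecting the terms yields $\mathcal{D}=\mathcal{O}(S\cdot(I'+1)\cdot k\cdot m+(kI')^2)$ and $\mathcal{N}=\mathcal{O}((I'+2)\cdot S)$. The fourth bullet is the shortest: \Cref{alg:approx-smpc} performs exactly two transfer rounds---the $S$ uploads of $\H_s\in\mathbb{R}^{m\times ck}$ and the $S$ broadcasts of $\hat{\H}\in\mathbb{R}^{m\times k}$---so $\mathcal{D}=\mathcal{O}(S\cdot k\cdot c\cdot m)$ and $\mathcal{N}=\mathcal{O}(S)$, and since $c$ is a constant, prepending this phase to any of the above leaves both $\mathcal{D}$ and $\mathcal{N}$ in the same complexity class.

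The argument is bookkeeping rather than conceptual, so the work and the only real obstacle lie in being careful: one must match every gray client-side block in the pseudocode to the aggregator line that consumes it so that no round is missed or double-counted; one must commit to a single convention for charging an aggregation step (many small client uploads collapsed into one broadcast sum) versus a full-matrix broadcast and use it uniformly---this is what makes the quadratic-in-$k$ and quadratic-in-$kI'$ overheads appear without an extra $S$ factor, consistent with the cross-silo regime in which $S$ is small; and one must check that the terms declared dominated---the mandatory final Gram-Schmidt, the initialization calls, and the approximate-initialization phase---are genuinely absorbed, which uses the implicit standing assumptions $m\geq k$ and $c=\mathcal{O}(1)$ and the fact that the full-dimensional singular-vector exchanges dominate the scalar and residual traffic.
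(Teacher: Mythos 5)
Your proposal is correct and follows essentially the same route as the paper's own proof: a direct per-iteration accounting of the $\H_i^s$ uploads and $\H_i$ broadcasts, an $\mathcal{O}(k^2)$-scalar/$\mathcal{O}(S\cdot k)$-round charge per federated Gram--Schmidt call, a phase-by-phase count for the randomized variant (initial $I'$ iterations, proxy-covariance exchange of size $(kI')^2$, final $\H$ round), and a single $\mathcal{O}(S\cdot k\cdot c\cdot m)$ round for approximate initialization. If anything, you are slightly more explicit than the paper about the aggregation convention and about why the mandatory closing Gram--Schmidt is absorbed, but the decomposition and the resulting bounds are the same.
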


\begin{proof}
	In each iteration $i$ of our federated vertical subspace iteration algorithm, the matrices $\H_i^s\in\mathbb{R}^{m\times k}$ have to be sent from the clients to the aggregator and the matrix $\H_i\in\mathbb{R}^{m\times k}$ has to be sent back to the clients. In iteration $i$, the amount of transmitted data and the number of communications due to $\H_i$ is hence $\mathcal{O}(S\cdot k\cdot m)$ and $\mathcal{O}(S)$, respectively. For orthonormalizing the eigenvector matrices $\G_i\in\mathbb{R}^{n\times k}$, we need to transmit a data volume of $\mathcal{O}(S\cdot k^2)$ and the number of communications increases to $\mathcal{O}(S\cdot k)$. By summing over the iterations $i$, this yields the statement of the proposition. In the randomized iteration the first $I'$ iterations have the same communication complexity that regular subspace iteration. Then the dimensionality of the matrix is reduced to $k\cdot I'\times n$ and the decomposition of $\M_{\hat{\A}}$ has transmission complexity $(k\cdot I')^2$. An additional communication of the final $\H$ is required which costs $S \cdot k \cdot m$. Thereby, the total complexity is $\mathcal{D}= \mathcal{O}(I'\cdot S \cdot k \cdot m + (I'\cdot k)^2 + S \cdot k \cdot m) = \mathcal{O}((S\cdot (I'+1) \cdot k \cdot m)+(k \cdot I')^2)$. Approximate initialization has a complexity of one round of subspace iteration, as $\H_i$ needs to be communicated once to the aggregator and back. The complexity classes hence remain the same.
\end{proof}

If our algorithms are used, the overall volume of transmitted data is hence independent of the number of samples $n$ and can be executed in a constant number of communication rounds. This is especially important in the intended GWAS setting, since the number of samples and features may ber very large \citep{Li2016, Londin2010}. Moreover, $k$ is small (typically, $k=10$ is used for GWAS PCA), which implies that the additional factor $k$ in the complexities of $\mathcal{D}$ and $\mathcal{N}$ can be neglected. Therefore, using the suggested scheme is preferable over sending the eigenvectors to the aggregator for orthonormalization both in terms of privacy and expected transmission cost. (In practice, it is advisable to perform the orthonormalization only at the end). \cite{Guo2012}'s algorithm has a complexity of $\mathcal{D}=\mathcal{O}(I\cdot S\cdot k)$ and $\mathcal{N}=  \mathcal{O}(I\cdot S)$ per eigenvector. The use of randomized SVD additionally partially removes the dependency of the algorithm from the number of SNPs/features which can be quite large in practice. (The worst case complexity class does not change due to the first iterations). Additionally, only a few iterations of the true feature eigenvectors are transmitted. Therefore, randomized SVD is preferable in terms of privacy and transmission cost.

\subsection{Summary}
\label{sec:algo:summary}
To conclude this section, we provide a brief summary of the main points and introduce a naming scheme for the configurations evaluated in \Cref{sec:eval}. We presented federated vertical subspace iteration with random (\FEDRI) initialization. To avoid the sharing of the sample eigenvector matrix, we introduced federated Gram-Schmidt orthonormalization (\FEDGS) which can be run at every iteration, but should be run only at the end. In order to speed up the computation in terms of communication rounds, we suggest to use a modified version of the approximate algorithm (\AIONLY) by \cite{Balcan_2014} as an initialization strategy for federated subspace iteration (\FEDAI). To reduce the transmitted data volume and the sharing of the feature eigenvectors, we suggest to use federated randomized subspace iteration (\FEDRANDRI).  \GUO is the reference algorithm. We summarize the asymptotic communication costs in \Cref{tab:cost-overview}.

 \begin{table}[ht]
	\centering
	\caption{Algorithm overview and complexity.}
	\label{tab:cost-overview}
	\begin{tabular}{llll}
		\toprule
		Algorithm(s) & Name & $\mathcal{D}$ & $\mathcal{N}$ \\
		\midrule
		\Cref{alg:subspace-federated} &\FEDRI & $\mathcal{O}(I\cdot S\cdot k\cdot m)$ &  $\mathcal{O}(I\cdot S)$.\\
		\Cref{alg:subspace-federated}+\ref{alg:approx-smpc} &\FEDAI  & $\mathcal{O}(I\cdot S\cdot k\cdot m)$ &  $\mathcal{O}(I\cdot S)$.\\
		\Cref{alg:subspace-federated}+\ref{alg:qr-federated} &\FEDRI/\FEDGS & $\mathcal{O}(I\cdot (S\cdot k\cdot m + k^2))$ & $\mathcal{O}(I\cdot S\cdot k)$. \\
		\Cref{alg:approx-random} &\FEDRANDRI & $\mathcal{O}((S\cdot (I'+1) \cdot k \cdot m)+(k \cdot I')^2)$ & $\mathcal{O}((I'+2)\cdot S)$.\\
		\Cref{alg:approx-smpc} &\AIONLY & $\mathcal{O}(S\cdot k\cdot m)$ &  $\mathcal{O}(S)$.\\
		\cite{Guo2012} &\GUO & $\mathcal{O}(I\cdot S\cdot k\cdot m)$ &  $\mathcal{O}(I\cdot S)$.\\
		\bottomrule
	\end{tabular}
\end{table}

\section{Iterative leakage at the aggregator}
\label{sec:leakage}
In this section, we describe how the iterative process discloses the covariance matrix when using sufficiently many iterations. We first introduce the problem (\Cref{sec:leakage:problem}) and then discuss how it can be addressed with the algorithms introduced \Cref{sec:fed-approx,sec:algo:randomized} above (\Cref{sec:leakage:solution}). Practical results are illustrated in \Cref{it-leak:numerical} below, using a simulation study.

\subsection{Iterative leakage of the covariance matrix}\label{sec:leakage:problem}

Iterative leakage at the aggregator might disclose the entire covariance matrix during the execution of the algorithm, as many updates of the variables become available. \Cref{fig:lin-eq} visualizes the update process in power iteration, and the information used to reconstruct a single row of the covariance matrix at one iteration. Notably, the aggregated vector $\H_i$ becomes known in clear text at the aggregator in every iteration. The aggregator can store the sequence of vectors $\H_i$. In the following we will show, how it is possible to construct a system of linear equations which will leak the covariance matrix. For the sake of this description, we will assume the eigenvector $\H_i$ is updated as $\H_i= \K\H_{i-1}$, where $\K=\D^\top\D$ is the feature-by-feature covariance matrix of the federated data matrix $\D$, which are both unknown to the aggregator. This is equivalent to the two-step update from the aggregator's perspective, but improves the readability.

\begin{figure}[ht]
	\centering
	\begin{tikzpicture}[node distance=.6cm]
	\node (A-classic) {\tikz\draw[step=0.25] (0,0) grid (1.5,-1.5);};
	\node[fill=white,font=\large,inner sep=1pt] (A-classic-label) at (A-classic) {\K};
	
	\node[anchor=south,font=\small,yshift=-15pt, xshift=-20pt] (k-label) at (A-classic.north west) {$K_{l,\bullet}$};
	\node[anchor=south,font=\small,yshift=-11pt] (k-label-anchor) at (A-classic.north west) {};
	
	\node[right = of A-classic, xshift=-15pt] (G-classic) {\tikz\draw[step=0.25] (0,0) grid (0.5,-1.5);};
	\node[fill=white,font=\large,inner sep=0.9pt] (G-classic-label) at (G-classic) {\H};
	\node[anchor=south,font=\small, xshift=7.5pt, yshift=10pt] (h-label) at (G-classic.north west) {$\H_{\bullet,l}^{i-1}$};
	\node[anchor=south,font=\small, xshift=7.5pt, yshift=-5pt] (h-label-anchor) at (G-classic.north west) {};
	\node[right = of G-classic] (G-classic-updated) {\tikz\draw[step=0.25] (0,0) grid (0.5,-1.5);};
	\node[fill=blue!40!white, rotate=90, xshift=-7.5pt, yshift=-7.25pt] (color-box) at (G-classic-updated.north west) {};
	\node[anchor=south,font=\small, xshift=7.5pt, yshift=10pt] (h-2-label) at (G-classic-updated.north west) {$H_{l,l}^i$};
	\node[anchor=south,font=\small, xshift=7.5pt, yshift=-15pt] (h-2-label-anchor) at (G-classic-updated.north west) {};

	\node[fill=white,font=\large, inner sep=0.9pt] (G-classic-label) at (G-classic-updated) {\H};
	\draw[->, semithick] (G-classic.east) -- (G-classic-updated.west);
	
	\draw[->, semithick,yshift=-15pt, xshift=-15pt] (k-label.east) -- (k-label-anchor.west);
	
	\draw[->, semithick,yshift=-15pt, xshift=-15pt] (h-label.south) -- (h-label-anchor.north);
	
	\draw[->, semithick,yshift=-15pt, xshift=-15pt] (h-2-label.south) -- (h-2-label-anchor.north);
	\end{tikzpicture}%
	
	\caption{Eigenvector update using the feature-by-feature covariance matrix.}
	\label{fig:lin-eq}
\end{figure}
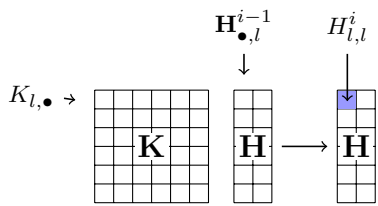

\begin{proposition}\label{it-leak:approach}
Let $\D \in \mathbb{R}^{n\times m}$ be the data matrix and denote $\K=\D^\top\D$ the feature-by-feature covariance matrix, which is unknown to the aggregator. Let $k$ be the number of eigenvectors retrieved.  When applying federated subspace iteration, the aggregator can reconstruct $\K$ after $m/k$ distinct eigenvector updates by solving a system of linear equations of the form $\K_{l,\bullet}\mathbf{A}=\mathbf{b}$ for each row $\K_{l,\bullet}$ of \K, where $\mathbf{A}\in\mathbb{R}^{m\times m}$ and $\mathbf{b}\in\mathbb{R}^{m}$ are known parameters. 
\end{proposition}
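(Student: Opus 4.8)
The plan is to rewrite the sequence of iterates observed by the aggregator as a linear system in the unknown entries of $\K$, and then to argue that the coefficient matrix of that system is invertible for data in general position, using the Vandermonde structure of block Krylov matrices.

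\emph{Setting up the system.} Under the modeling assumption $\H_i=\K\H_{i-1}$, the $l$-th row of $\H_i$ satisfies $(\H_i)_{l,\bullet}=\K_{l,\bullet}\H_{i-1}$: a system of $k$ scalar equations that is linear in the $m$ unknown entries of the row vector $\K_{l,\bullet}\in\mathbb{R}^{1\times m}$, with coefficient matrix $\H_{i-1}$ known to the aggregator and \emph{independent of $l$}. Collecting these equations over $i=1,\dots,T$ with $T=m/k$ (assume $k\mid m$ for readability; otherwise take $T=\lceil m/k\rceil$) gives, for every $l\in[m]$,
\begin{equation}
\K_{l,\bullet}\,\mathbf{A}=\mathbf{b}_l,\qquad \mathbf{A}=\big[\,\H_0 \mid \H_1 \mid \dots \mid \H_{T-1}\,\big]=\big[\,\H_0 \mid \K\H_0 \mid \dots \mid \K^{T-1}\H_0\,\big],
\end{equation}
where $\mathbf{b}_l=[\,(\H_1)_{l,\bullet}\mid\dots\mid(\H_T)_{l,\bullet}\,]\in\mathbb{R}^{1\times Tk}$. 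Both $\mathbf{A}$ and $\mathbf{b}_l$ are known to the aggregator, so it suffices to show that $\mathbf{A}$ has rank $m$; then $\K_{l,\bullet}=\mathbf{b}_l\mathbf{A}^{-1}$ is determined, and ranging over $l$ reconstructs all of $\K$.

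\emph{Invertibility of $\mathbf{A}$.} This is the crux. The matrix $\mathbf{A}=[\,\H_0\mid\K\H_0\mid\dots\mid\K^{T-1}\H_0\,]$ is a block Krylov matrix, and the claim is that it has full rank $m$ once $Tk\ge m$, for a generic starting matrix $\H_0$ and for data $\D$ in general position (so that $\K=\D^\top\D$ has $m$ distinct eigenvalues). Diagonalize $\K=Q\Lambda Q^\top$ with $\Lambda=\operatorname{diag}(\lambda_1,\dots,\lambda_m)$, and set $W=Q^\top\H_0$ with rows $w_l^\top\in\mathbb{R}^{1\times k}$; then $\K^j\H_0=Q\Lambda^jW$, so $\mathbf{A}=Q\,\mathbf{B}$ with $\mathbf{B}=[\,W\mid\Lambda W\mid\dots\mid\Lambda^{T-1}W\,]$, and it is enough to show $\operatorname{rank}\mathbf{B}=m$. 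Now $\det\mathbf{B}$ (or an $m\times m$ minor, in the non-square case) is a polynomial in the entries of $W$ and in $\lambda_1,\dots,\lambda_m$, so it suffices to exhibit one instance where it is nonzero: take the $\lambda_l$ distinct, partition $[m]$ into $k$ blocks of size $T$, and for $l$ in block $t$ let $w_l$ be the $t$-th standard basis vector of $\mathbb{R}^k$. Then the columns of $\mathbf{B}$ belonging to coordinate $t$ are supported on the rows of block $t$, where they form a $T\times T$ Vandermonde matrix in the (distinct) eigenvalues of that block; after reordering rows and columns, $\mathbf{B}$ is block diagonal with invertible Vandermonde blocks, hence $\det\mathbf{B}\ne0$. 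Therefore $\det\mathbf{B}$ is not identically zero, so $\mathbf{B}$, and with it $\mathbf{A}$, is invertible for generic $\H_0$ and generic spectrum of $\K$, which finishes the argument.

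\emph{Where the difficulty lies.} I expect the full-rank statement for $\mathbf{A}$ to be the only real obstacle; the setup and the final back-substitution are bookkeeping. Two caveats are worth flagging. First, invertibility is only generic: it can fail on a measure-zero set (a pathological $\H_0$, repeated eigenvalues of $\K$, or\,---\,most relevant in practice\,---\,iterates taken \emph{after} convergence, where successive blocks $\K^j\H_0$ stop contributing new directions), which is precisely why the statement asks for $m/k$ \emph{distinct} updates and why, for a random start in exact arithmetic, the first $\lceil m/k\rceil$ iterations already suffice. Second, the clean recursion $\H_i=\K\H_{i-1}$ abstracts away the per-round re-orthonormalization in \Cref{alg:subspace-federated}, so that in reality $\H_i=\K\H_{i-1}M_i$ for an unknown invertible $M_i\in\mathbb{R}^{k\times k}$; one handles this as indicated in the text (the aggregator can undo the per-round change of basis, e.g.\ via $\operatorname{span}(\H_i)=\operatorname{span}(\K\H_{i-1})$), or simply works under the stated modeling assumption.
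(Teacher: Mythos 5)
Your setup is exactly the paper's: assume $\H_i=\K\H_{i-1}$, read off one batch of $k$ linear equations per update for each row $\K_{l,\bullet}$, stack the observed iterates into $\mathbf{A}=[\H_0\mid\cdots\mid\H_{T-1}]$ with $T=m/k$, and solve $\K_{l,\bullet}=\mathbf{b}\mathbf{A}^{-1}$. The difference is in the key step: the paper's proof simply asserts that the system is \enquote{fully determined, given that the eigenvectors have not converged}, and relegates possible singularity of $\mathbf{A}$ (post-convergence iterates, numerical issues) to an informal remark after the proof, where a least-squares solver is recommended; you actually prove generic invertibility via the block-Krylov/Vandermonde argument, writing $\mathbf{A}=Q[\,W\mid\Lambda W\mid\cdots\mid\Lambda^{T-1}W\,]$ and exhibiting one choice of $(W,\Lambda)$ with block-diagonal Vandermonde structure so that the relevant determinant is a not-identically-vanishing polynomial. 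That argument is correct and supplies precisely what the paper leaves implicit, at the price of being only generic, which you flag appropriately. Two remarks. First, your caveat about an unknown per-round mixing matrix $M_i$ is not actually an obstacle in this setting: in \Cref{alg:subspace-federated} the aggregator itself performs the orthonormalization of $\H_{i-1}$ and then receives the raw aggregate $\H_i=\sum_s\A^s{\A^s}^\top\H_{i-1}$, so the relation $\H_i=\K\H_{i-1}^{\mathrm{orth}}$ links matrices the aggregator knows exactly; this is what the paper means by the simplified recursion being \enquote{equivalent to the two-step update from the aggregator's perspective}, and your rank argument is unaffected since these are known column operations. Second, your genericity hypothesis that $\K=\D^\top\D$ have $m$ distinct eigenvalues silently requires $n\geq m$; when $n<m$ (the typical GWAS regime) $\K$ is rank-deficient, $\mathbf{A}$ cannot attain rank $m$, and only the component of $\K_{l,\bullet}$ in the row space of $\K$ is identifiable\,---\,a limitation the paper's own proof shares and does not acknowledge, so it does not count against you, but it is worth stating if you keep the genericity lemma.
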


\begin{proof}
Let $\K_{l,\bullet}$ denote a row of the covariance matrix $\K\in\mathbb{R}^{m\times m}$. First, we show how series $(\H^i_{\bullet, 1})_{i=1}^m$ of $m$ updates of the first eigenvector can be used to retrieve the row $\K_{l,\bullet}$ of $\K$. Susequently, we show that $m/k$ updates are sufficient if all $k$ eigenvectors are used. 

Since $\K_{l,\bullet}$ is a row vector of length $m$, one needs $m$ equations, which can be derived from $m$ consecutive updates of the column vector $\H^i_{\bullet, 1}$. The aggregator can store the consecutive updates of $\H^i_{\bullet, 1}$ and, for each $i$, store an equation of the form $\K_{l,\bullet} \H^{i-1}_{\bullet, 1}=H^i_{l,1}$. After $m$ iterations, the aggregator is able to formulate the following fully determined system of linear equations, given that the eigenvectors have not converged:

\begin{equation*}
    \K_{l,\bullet}\begin{bmatrix}
    \H^0_{\bullet,1} & \cdots & \H^{m-1}_{\bullet,1}
    \end{bmatrix}=\begin{bmatrix}
    H^1_{l,1} & \cdots & H^m_{l,1}
    \end{bmatrix}
\end{equation*}


In order to reduce the number of required iterations, the aggregator can use all vectors in $\H$ to formulate the linear system and thereby divide the number of required iterations by $k$: 

\begin{equation*}
    \K_{l,\bullet}\overbrace{\begin{bmatrix}
    \H^0_{\bullet,1} & \cdots & \H^0_{\bullet,k} & \cdots & \H^{\frac{m}{k}-1}_{\bullet,1} & \cdots & \H^{\frac{m}{k}-1}_{\bullet,k}
    \end{bmatrix}}^{\mathbf{A}}=\overbrace{\begin{bmatrix}
    H^1_{l,1} & \cdots & H^1_{l,k} & \cdots & H^{\frac{m}{k}}_{l,1} & \cdots & H^{\frac{m}{k}}_{l,k}
    \end{bmatrix}}^{\mathbf{b}}
\end{equation*}


The rows of $\K$ can be computed simultaneously, by forming a system for all $\K_{l, \bullet}$ at the same time. Therefore, in theory, this means that after $m/k$ iterations, one has the full system and can solve it as
\begin{equation}
	\K_{l, \bullet}=\mathbf{b}\mathbf{A}^{-1}\text{,}
	\label{eq:naive-solve}
\end{equation}
which completes the proof of the proposition.
\end{proof}

These theoretical results require to invert $\mathbf{A}$, which may pose a problem in numerical applications, especially once the $\H^i$ grow large. By using a linear least squares solver, the inversion of the matrix $\mathbf{A}$ can be prevented at the cost of possibly sub-optimal solutions. Furthermore, in practice, more care needs to be taken when constructing the system, because, once converged, the eigenvectors do not provide a new equation to be added to the system anymore and hence lead to a singular system. In  \Cref{it-leak:numerical}, we show that our approach works on small data.

\subsection{Mitigation strategies}\label{sec:leakage:solution}

Recall that we claimed that \Cref{alg:approx-smpc,alg:approx-random} improve the privacy of subspace iteration. After having established that the full global covariance matrix can be reconstructed after sufficiently many iterations, it becomes clear that reducing the number of iterations makes this attack more difficult. \Cref{alg:approx-smpc} achieves this by using a better initial eigenvector guess and thus reduces the number of iteration until convergence. The randomized \Cref{alg:approx-random} shares the initial eigenvector updates, but then shares only proxy eigenvectors, whose entries do not correspond to real features in the data, effectively reducing the number of useful iterations for the aggregator to a constant number $I^\prime$. Therefore, these algorithms provide a algorithmic privacy improvement over previous solutions. 

The attack approach described above is possible even when secure multiparty computation (SMPC) \citep{cramer2015secure} is used, as the aggregated updates still become available in clear text at the aggregator. SMPC does however prevent the disclosure of the local covariance matrices, so using it is beneficial in truly federated implementations of this algorithm. Apart from the approximate algorithm, which uses SVD as an aggregation strategy, all algorithms are trivially compatible with secure aggregation as employed according to \cite{cramer2015secure}. Naturally, perturbation techniques like differential privacy \citep{Balcan2016} can be used to prevent the presented attack at the cost of decreased result accuracy. However, the high dimensionality $m$ of the data might prove prohibitive, as the noise scales with $m$.


\section{Empirical evaluation}\label{sec:eval}

\subsection{Test datasets}
To evaluate our federated PCA algorithm, we used three publicly available datasets: chromosome 1 and 2 from a genetic dataset from the 1000 Genomes Project \citep{Auton2015}, as well as the MNIST database of handwritten digits \citep{mnist} (\Cref{tab:dataset-overview}). The two genetic data sets contain data for \num{2502} individuals (samples). After applying standard pre-processing steps (MAF filtering, LD pruning) we created \num{3} data set versions for each chromosome, with \num{100000}, \num{500000} and $>$\num{1000000} SNPs, respectively. MNIST contains \num{60000} grayscale images of handwritten numerals (samples), each of which has \num{784} pixels (features). This data set was split into \num{5} and \num{10} equal chunks. To the best of our knowledge, publicly available genetic data sets with large numbers of patients are not readily available. However, although motivated by federated GWAS, our federated SVD algorithm is actually generically applicable. The experiments on MNIST demonstrate its usefulness for a more general audience. The MNIST data set is available at \url{http://yann.lecun.com/exdb/mnist/}, the genetic data can be obtained from \url{ftp://ftp.1000genomes.ebi.ac.uk/vol1/ftp/release/20130502/}.

 \begin{table}[h]
\centering
\caption{Datasets used in the study.}
    \label{tab:dataset-overview}
    \begin{tabular}{lS[table-format=6.0]S[table-format=6.0]}
        \toprule
        Dataset & {Samples} & {Features} \\
        \midrule
         MNIST & 60000 & 784   \\
         1000 Genomes -- Chrom.\ 1 & 2502 & 100000  \\
         1000 Genomes -- Chrom.\ 1 & 2502 & 500000  \\
         1000 Genomes -- Chrom.\ 1 & 2502 & 1069419  \\
         1000 Genomes -- Chrom.\ 2 & 2502  & 100000 \\
         1000 Genomes -- Chrom.\ 2 & 2502  & 500000 \\
         1000 Genomes -- Chrom.\ 2 & 2502  & 1140556 \\
        \bottomrule
    \end{tabular}
\end{table}

\subsection{Compared methods}
We compare several configurations against each other: Federated subspace iteration with random initialization (\FEDRI), federated subspace iteration with approximate initialization (\FEDAI), and federated randomized subspace iteration (\FEDRANDRI). Federated subspace iteration which employs federated orthonormalization in every round (\FEDGS) is extremely communication inefficient by adding $2k$ additional rounds per iteration which has been proven a major bottleneck in preliminary studies. Therefore, we omit this algorithm from the empirical evaluation, because the practical use for SVD is limited. We compare ourselves to the algorithm presented by \cite{Guo2012}, denoted \GUO, as they present a solution which omits the covariance matrix and a way to deal with vertical data partitioning. However, \GUO shares the right singular vectors $\G$ and all updates of the left singular vectors $\H$ with the aggregator, which should be avoided in federated GWAS as emphasized in \Cref{sec:algo} and \Cref{sec:leakage}. Furthermore, as the number of features grows large, the transmission cost increases. We tested other configurations, including the use of approximate initialization for randomized PCA, but excluded them in this article as they did not bring a gain in performance in practice. For all compared methods, we set the convergence criterion in \cref{eq:convergence-power} to $\epsilon=10^{-9}$, which corresponds to a change of the angle between two consecutive eigenvectors updates of about $0.0026$ degrees. Note that this angle does not equal the angle \wrt centrally computed eigenvectors, which we used as a test metric for measuring the quality of the compared methods (\cf next subsection).

\begin{figure*}[htb]
\centering
\includegraphics[width=\linewidth]{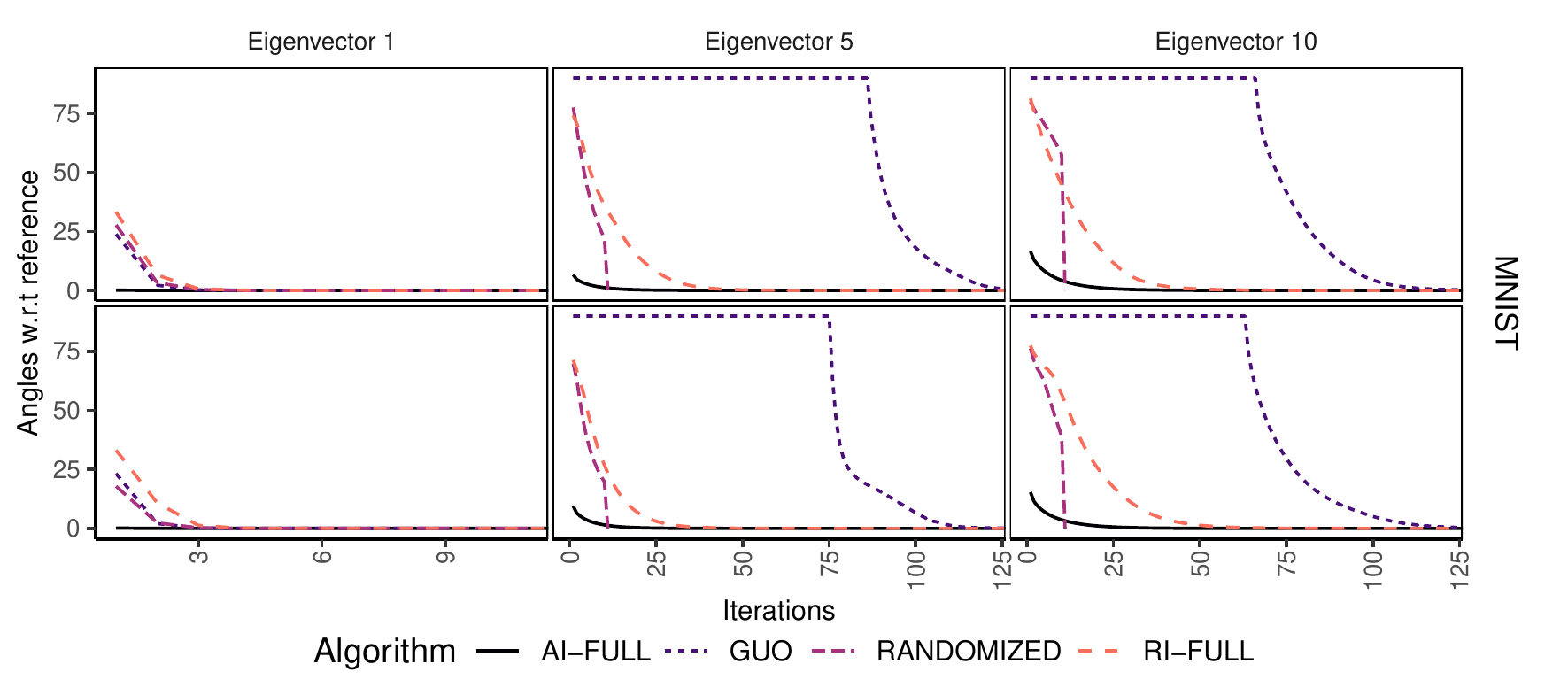}
\caption{Angles between selected reference eigenvectors and the federated eigenvectors for the MNIST data. The omitted eigenvectors show similar behaviors.}
\label{fig:angles-mnist}
\end{figure*}

\begin{figure*}[htb]
	\centering
	\includegraphics[width=\linewidth]{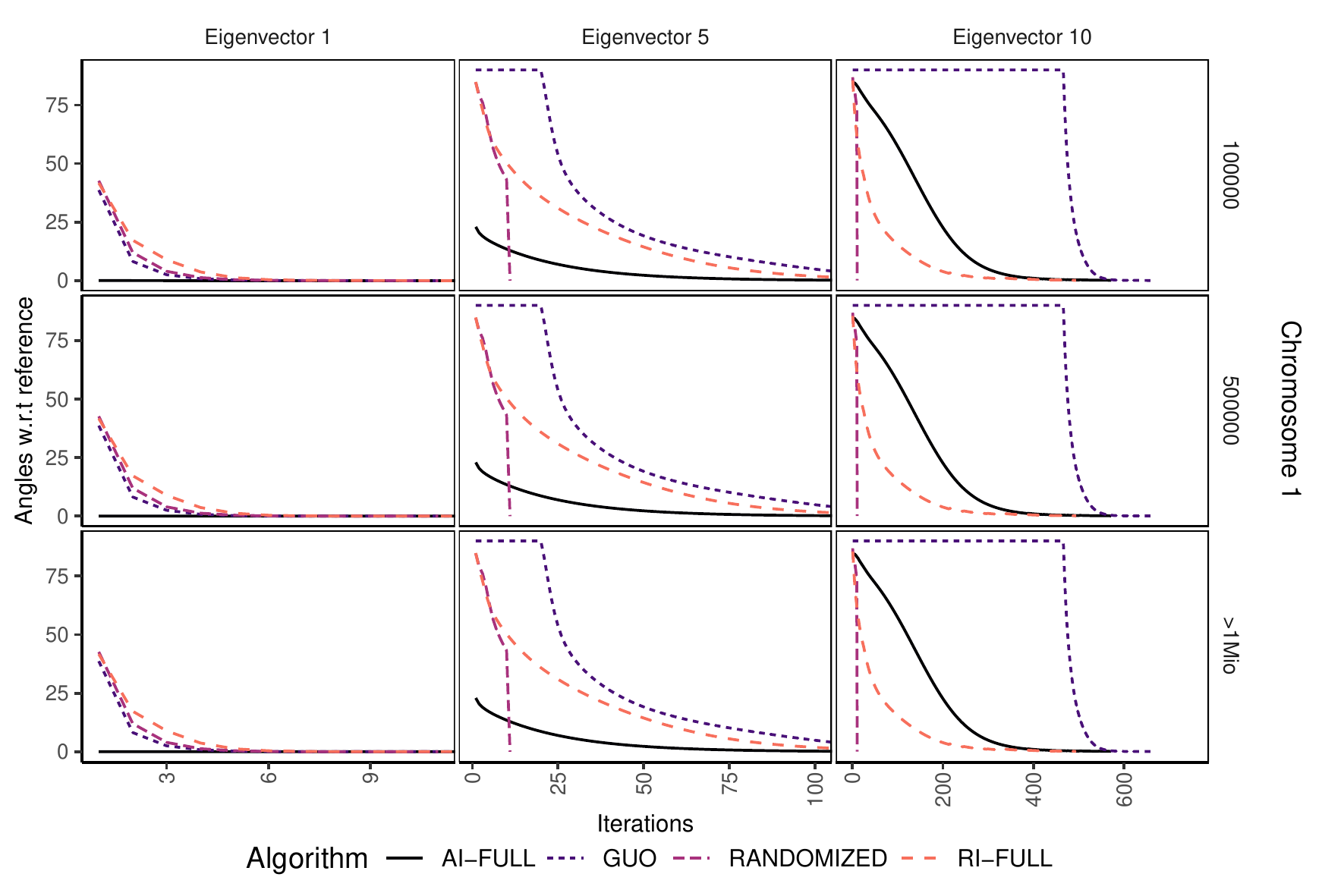}
	\includegraphics[width=\linewidth]{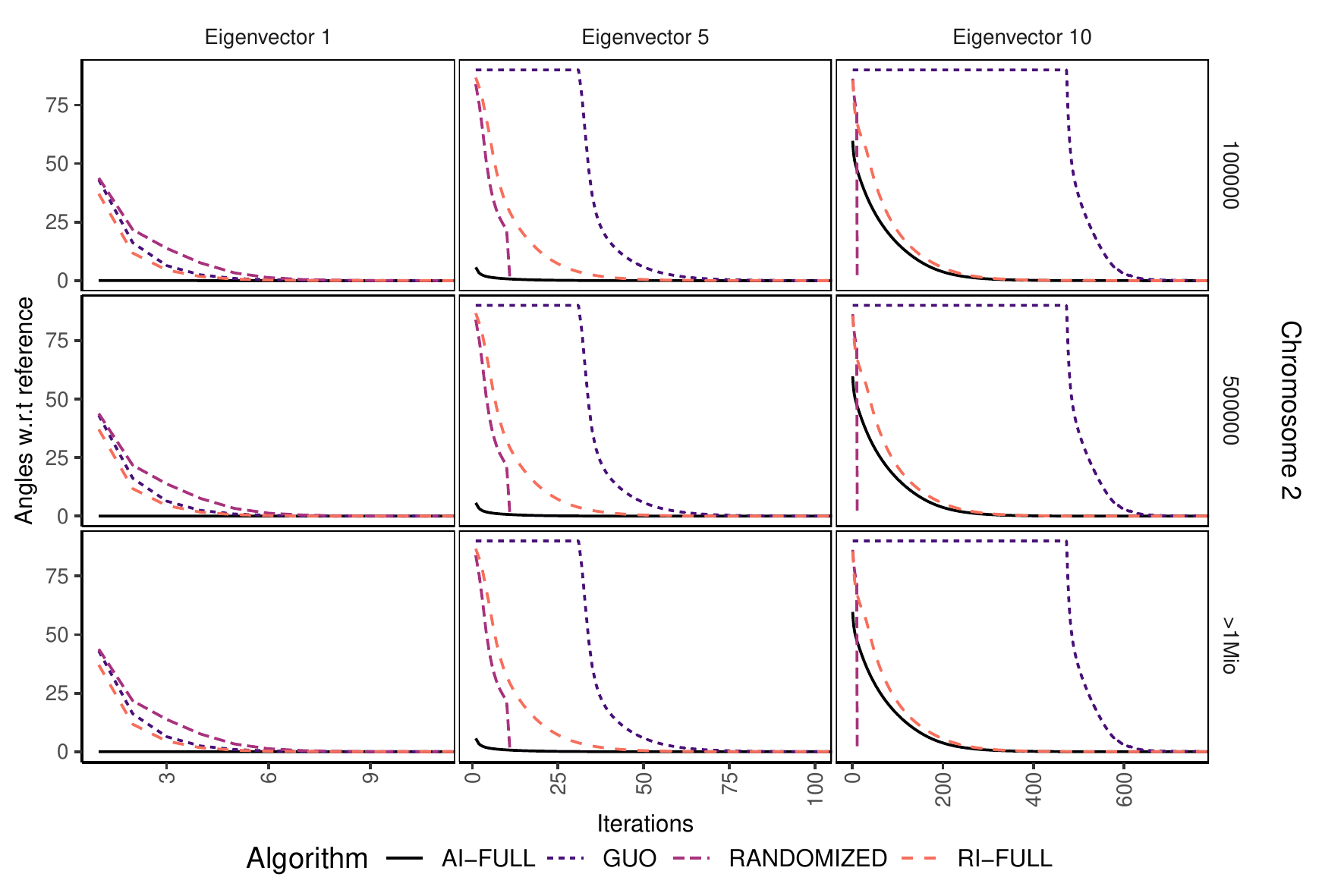}
	\caption{Angles between selected reference eigenvectors and the federated eigenvectors on chromosome 1 and 2. The omitted eigenvectors show similar behaviors.}
	\label{fig:angles-chr1}
\end{figure*}

\begin{figure}[htb]
	\centering
	\includegraphics[width=\linewidth]{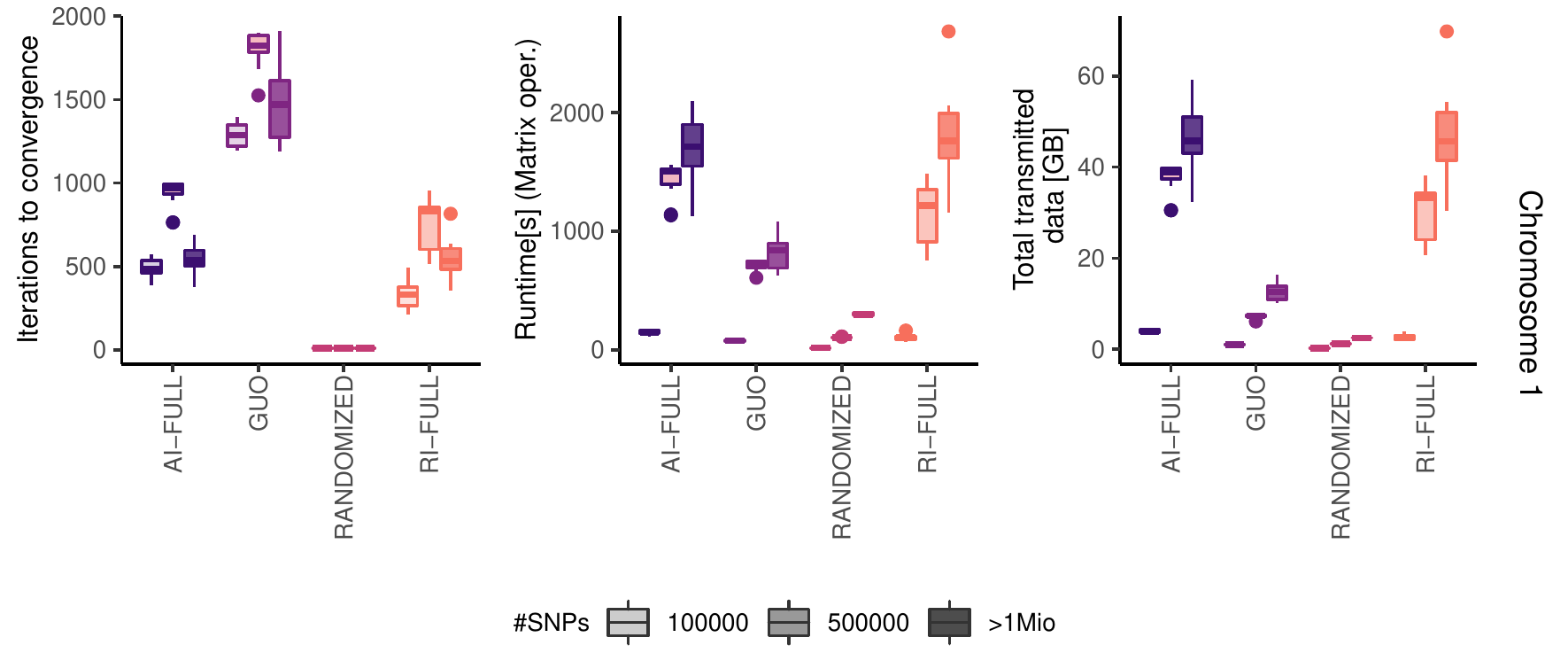}
	\includegraphics[width=\linewidth]{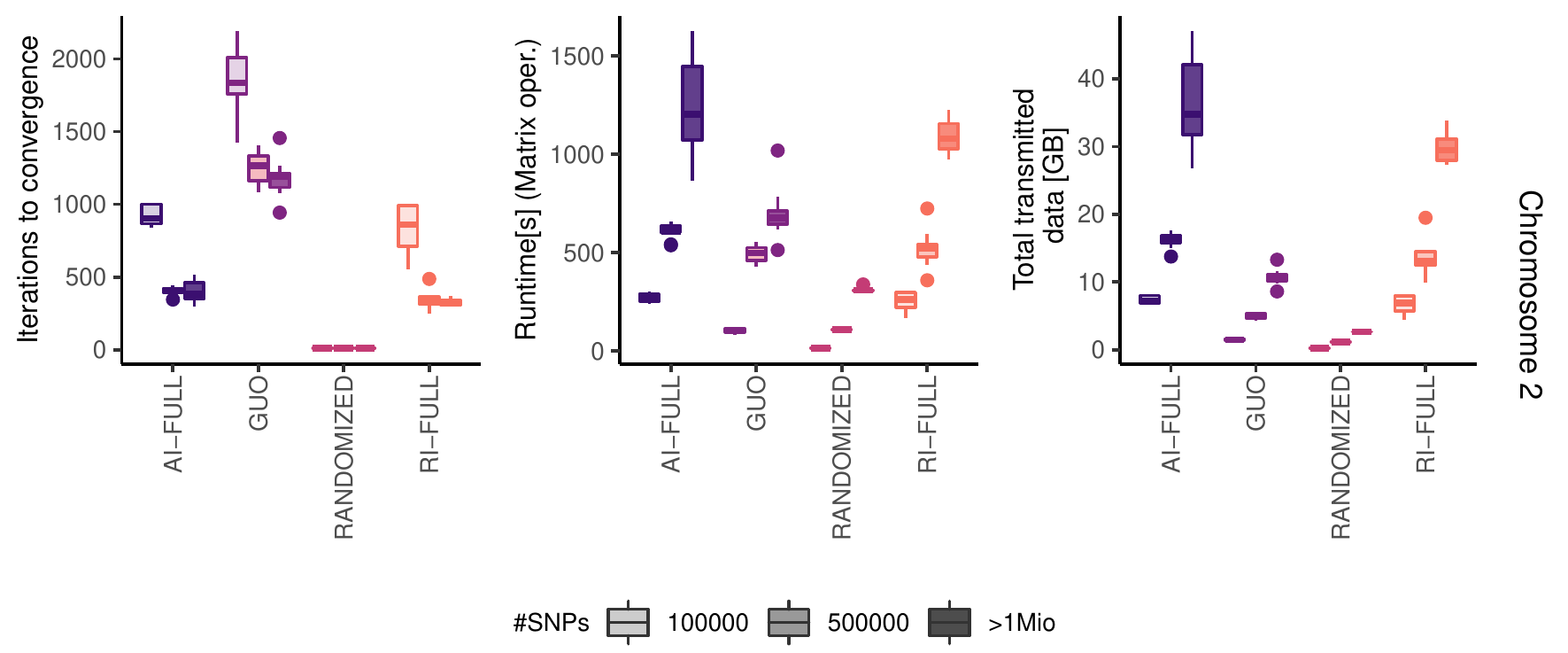}
	\caption{Iterations, Runtime for matrix computations, and total transmitted data for each algorithm and each of the three data sets. The boxplots are grouped, the shading indicates the number of features ($0.1*10^6$, $0.5*10^6$, and roughly $1*10^6$).}
	\label{fig:benchmark}
\end{figure}

\begin{figure}[htb]
	\centering
	\includegraphics[width=\linewidth]{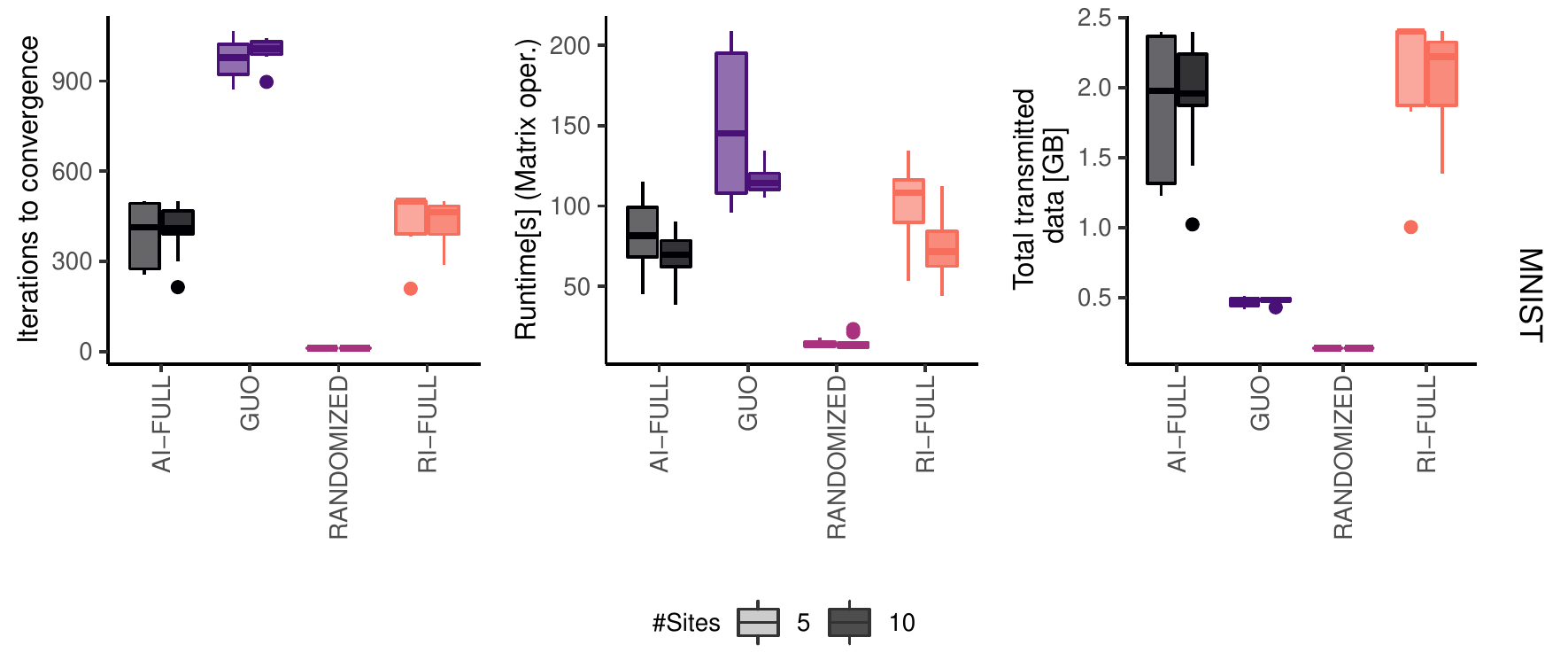}
	\caption{Iterations, Runtime for matrix computations, and total transmitted data for each algorithm for the MNIST data. The boxplots are grouped, the shading indicates the number of simulated clients (5, and 10).}
	\label{fig:benchmark-mnist}
\end{figure}

\subsection{Test metrics}
For measuring the quality of the compared methods, we computed the angles between the eigenvectors obtained from a reference implementation of a centralized PCA and their counterparts computed in a federated fashion. An angle of $0$ between two eigenvectors of the same rank is the desired result. As a reference, we chose the version implemented in \texttt{scipy.sparse.linalg}, which internally interfaces LAPACK. The amount of transmitted data is estimated by calculating the number of transmitted floats and multiplying it by a factor of 4 bytes (single precision IEEE 754). We choose this metric to remain agnostic with respect to the transmission protocol. Times measures are wall clock times using Python's \texttt{time} module. We chose to measure the runtime for matrix operations only, as they are the most important contributor to the overall runtime apart from communication related runtime. 



\subsection{Implementation, availability, and hardware specifications}
All methods except the web interface are written in Python, using mainly, but not exclusively \texttt{numpy} and \texttt{scipy}. They are available online at {\url{https://github.com/AnneHartebrodt/federated-svd}}. The simulation tests were run on a compute server with 48 CPUs and 502 GB available RAM due to the size of the genetic data sets. A federated tool compatible with the FeatureCloud \citep{Matschinske2021} ecosystem (\url{featurecloud.ai}) is available on the platform. The corresponding source code can be found at {\url{https://github.com/AnneHartebrodt/fc-federated-svd}}. We also created an AIME report \citep{matschinske2021aime} to promote accessibility of machine learning research \citep{aimeReportlP0kqT}.

\subsection{Convergence behavior}
To test the convergence behavior of the compared federated algorithms, we split the genetic data sets into \num{5} equally sized chunks; and the MNIST data set into \num{5} and \num{10} chunks.  For every algorithm, we then recorded the angles between the first 10 eigenvectors \wrt the fully converged references at each iteration averaged across \num{10} runs.

\Crefrange{fig:angles-mnist}{fig:benchmark-mnist} show the results of the experiments. Note that, unlike the versions \FEDRI, \FEDAI, \FEDRANDRI of our algorithm, the competitor \GUO computes the eigenvectors sequentially (\ie, eigenvector $k$ has to converge before starting the computation of the eigenvector $k+1$), which means that, for all but the first eigenvector, the plots for \GUO start with a horizontal line. The most important result is that, for all algorithms, the eigenvectors perfectly converge to the reference eventually. 

For low ranking eigenvectors, the approximate initialization speeds up the computation, because these eigenvectors can be well approximated and start with angles to the reference close to 0 (see \Cref{fig:angles-mnist,fig:angles-chr1}). The gain decreases in higher dimensions. Therefore, \FEDRANDRI shows the overall best convergence behavior across all data sets and dimensions. 

The number of sites does not influence the convergence behavior, as can be seen in the test with the MNIST data (\Cref{fig:angles-mnist,fig:benchmark-mnist}) where the convergence curves and the required number of iterations are similar for the simulations with \num{5} clients and \num{10} clients. The transmitted data is shown only from the aggregator's perspective, to make the runs comparable. In federated SVD, the transmission cost scales with the number of clients.

The number of features/SNPs in the data does not show a clear trend. Although in the convergence plots in \Cref{fig:angles-chr1} the larger data sets seem to converge more quickly, the overall number of iterations in \cref{fig:benchmark} does not confirm this trend. The reason for this is the dependence of the convergence speed on the eigengaps (the difference between two consecutive eigenvalues), an inherent property of each data set. The smaller the eigengap, the worse the convergence behavior. \Cref{tab:dataset-eigengap} shows the eigengaps for the eigengaps for Chromosome 2. The higher ranking eigengaps are generally quite small, indicating generally bad convergence for all datasets. Eigengap 8 for the data set containing 100000 SNPs specifically, is comparably even smaller which could explain the especially poor convergence.

 \begin{table}[h]
\centering
\caption{Eigengaps for Chromosome 2}
    \label{tab:dataset-eigengap}
    \begin{tabular}{lSSSSSSSSSS}
        \toprule
        {SNPs} & {EG1} & {EG2}& {EG3}& {EG4}& {EG5}& {EG6}& {EG7}& {EG8}& {EG9} \\
        \midrule
        100000 & 15.14&16.02&1.19&3.95&0.9&1.87&0.32&0.05&0.12  \\
        500000 &  26.68&18.41&3.36&12.65&2.71&0.62&0.91&0.2&0.28 \\
        1140556 &  31.5&24.79&4.02&15.59&3.3&3.41&0.39&1.2&0.25 \\
        \bottomrule
    \end{tabular}
\end{table}

\subsection{Scalability}
To gauge the scalability of the methods with respect to runtime and transmission cost, we recorded the number of iterations until convergence, the runtime for matrix operations, and the estimated total amount of transmitted data for the selected algorithms. In \Cref{fig:benchmark} we see that the amount of transmitted data is the smallest for the randomized algorithm \FEDRANDRI, followed by \GUO and with a significant distance \FEDAI and \FEDRI. \FEDRANDRI also spends the least time on the matrix operations which are the major contributor in to the runtime. The number of required iterations is the smallest for \FEDRANDRI, followed with large gap by \FEDAI and \FEDRI and \GUO on the last place. Since the required iterations correspond to the number of communication steps, this factor significantly contributes to the overall runtime. Overall, \FEDRANDRI performs the best in all three measured categories. Generally, the bottleneck in federated learning is the number of transmission steps during the learning process, as this involves network communication. However, with increasing data set size like in the presented GWAS case, also the reduction in local runtime shows considerable impact on the overall runtime.

\subsection{Covariance reconstruction experiment}
\label{it-leak:numerical}

We implemented the covariance reconstruction scheme presented in \Cref{sec:leakage} and applied in on small example data, demonstrating its practicality. Using the breast cancer data and the diabetes data set from the UCI repository \citep{Dua2019} which have \num{442} and \num{569} samples and \num{10} and {30} features respectively, we computed the centralized covariance matrix. Then we ran federated subspace iteration and recorded the eigenvector updates. After $m/k$ iterations, we used the recorded matrices to form the linear system described in \Cref{sec:leakage:problem}. Instead of inverting the matrix as described in \cref{eq:naive-solve}, we used a linear least squares solver (\texttt{scipy.lstsq}) to compute the solution. We then computed the Pearson correlation between the true and the reconstructed covariance matrix, with a perfect outcome of 1 (see \Cref{tab:dataset-reconstruction}) in negligible time. 

 \begin{table}[ht]
	\centering
	\caption{Reconstruction of covariance matrix.}
	\label{tab:dataset-reconstruction}
	\begin{tabular}{lcccc}
		\toprule
		Dataset & {Samples} & {Features} & {Correlation} & {Time[s]} \\
		\midrule
		Breast Cancer & 442 & 10  & 1& 0.001\\
        Diabetes & 569 & 30 & 0.997 & 0.004 \\
		\bottomrule
	\end{tabular}
\end{table}


\section{Conclusions and outlook}\label{sec:conc}
In this paper, we presented an improved federated SVD algorithm which is applicable to both vertically and horizontally partitioned data and, at the same time, increases the privacy compared to previous solutions.

Although our algorithm is motivated by the requirements of population stratification in federated GWAS, it is generically applicable. We proved that a first version of our algorithm is equivalent to a state-of-the-art centralized SVD algorithm and demonstrated empirically that it indeed converges to the centrally computed solutions. Subsequently, we improved the algorithm by including techniques from other federated and centralized algorithms to increase scalability and reduce the number of required communications.

There are two key advantages of our algorithm: Firstly, unlike in existing federated PCA algorithms, the sample eigenvectors remain at the local sites, due to the use of fully federated Gram-Schmidt orthonormalization, which improves the privacy of the algorithm. Secondly, the algorithm limits the amount of transmitted data (via smart initialization and data approximation) and is thereby more scalable and further prevents information leakage. In particular, the transmission cost of the randomized algorithm is not dependent on the number of samples and only partially dependent on the number of features.

\section*{Funding}
The FeatureCloud project has received funding from the European Union’s Horizon 2020 research and innovation programme under grant agreement No 826078. This publication reflects only the authors’ view and the European Commission is not responsible for any use that may be made of the information it contains.

\section*{Competing interests}
The authors have no relevant financial or non-financial interests to disclose.

\bibliography{references}


\end{document}